\theoremstyle{plain}
\newtheorem{theorem}{Theorem}[section]
\newtheorem{lemma}[theorem]{Lemma}
\theoremstyle{definition}
\newtheorem{assumption}[theorem]{Assumption}
\theoremstyle{remark}
\newtheorem{remark}[theorem]{Remark}
\newtheorem{example}[theorem]{Example}
\newcommand{\ee}{\mathbb{E}}
\newcommand{\prob}{\mathbb{P}}
\newcommand{\norm}[1]{\left\lVert#1\right\rVert}
\DeclarePairedDelimiter\ceil{\lceil}{\rceil}
\icmltitlerunning{Efficient Exploration in Average-Reward Constrained RL: Achieving Near-Optimal Regret With Posterior Sampling}
\begin{document}

\twocolumn[
\icmltitle{Efficient Exploration in Average-Reward Constrained Reinforcement Learning: \\ Achieving Near-Optimal Regret With Posterior Sampling}



\icmlsetsymbol{equal}{*}

\begin{icmlauthorlist}
\icmlauthor{Danil Provodin}{tue,jads}
\icmlauthor{Maurits Kaptein}{tue}
\icmlauthor{Mykola Pechenizkiy}{tue,jyv}
\end{icmlauthorlist}

\icmlaffiliation{tue}{Eindhoven University of Technology, Eindhoven, The Netherlands}
\icmlaffiliation{jyv}{University of Jyväskylä, Jyväskylä, Finland}
\icmlaffiliation{jads}{Jheronimus Academy of Data Science, ‘s-Hertogenbosch, The Netherlands}

\icmlcorrespondingauthor{Danil Provodin}{d.provodin@tue.nl}

\icmlkeywords{Machine Learning, ICML}

\vskip 0.3in
]



\printAffiliationsAndNotice{}  

\begin{abstract}
We present a new algorithm based on posterior sampling for learning in Constrained Markov Decision Processes (CMDP) in the infinite-horizon undiscounted setting. The algorithm achieves near-optimal regret bounds while being advantageous empirically compared to the existing algorithms. Our main theoretical result is a Bayesian regret bound for each cost component of $\tilde{O} (DS\sqrt{AT})$ for any communicating CMDP with $S$ states, $A$ actions, and diameter $D$. This regret bound matches the lower bound in order of time horizon $T$ and is the best-known regret bound for communicating CMDPs achieved by a computationally tractable algorithm. Empirical results show that our posterior sampling algorithm outperforms the existing algorithms for constrained reinforcement learning.
\end{abstract}

\section{Introduction}

Reinforcement learning (RL) refers to the problem of learning by trial and error in sequential decision-making systems based on the scalar signal aiming to minimize the total cost accumulated over time.
In many situations, however, the desired properties of the agent behavior are better described using constraints, as a single objective might not suffice to explain the real-life setting. For example, a robot should not only fulfill its task but should also control its wear and tear by limiting the torque exerted on its motors \citep{tessler2018reward}; for telecommunication networks, it is necessary that the average end-to-end delay be limited, especially for voice traffic, while maximizing the throughput of the system \citep{Altman99constrainedmarkov}; and autonomous driving vehicles should reach the destination in a time and fuel-efficient manner while obeying traffic rules \cite{Le2019BatchPL}. A natural approach for handling such cases is specifying the problem using multiple objectives, where one objective is optimized subject to constraints on the others.

A typical way of formulating the constrained RL problem is a Constrained Markov Decision Process (CMDP) \citep{Altman99constrainedmarkov}, which proceeds in discrete time steps. At each time step, the system occupies a \textit{state}, and the decision maker chooses an \textit{action} from the set of allowable actions. As a result of choosing the action, the decision maker receives a (possibly stochastic) vector of \textit{costs}, and the system then transitions to the next state according to a fixed \textit{state transition distribution}. In the reinforcement learning problem, the underlying state transition distributions and/or cost distributions are unknown and need to be learned from observations while aiming to minimize the total cost. 

Learning in CMDPs has been a recurrent topic in the reinforcement learning literature, with numerous works addressing this challenge in episodic and discounted settings (see, e.g., \citet{Efroni_2020_CMDP, NEURIPS2020_Brantley, qiu_2021_cmdp_posterior, Liu_2021, kalagarla2023safe}). We consider the reinforcement learning problem in a more general \textit{infinite-horizon average reward} setting. When decisions are made frequently so that the discount rate is very close to 1, the decision-makers may prefer to compare policies on the basis of their expected infinite-horizon average reward instead of the expected total discounted reward, and the objective becomes to achieve optimal long-term average performance under constraints. This criterion is especially relevant for inventory systems with frequent restocking decisions or queueing control theory, particularly when applied to controlling computer systems \citep{Puterman_mdp}. 

\begin{table*}
\caption{ Summary of work on provably efficient constrained RL in the infinite-horizon average reward setting. $S$ and $A$ represent the number of states and actions, $m$ is the number of constraints, $T$ is the total horizon,  $T_M$ is the mixing time, $D$ is the diameter of CMDP, $p$ represents transitions, and $sp(p)$ is the span of CMDP (defined in Section \ref{sec:problem_formulation}). $\tilde{O}$ hides logarithmic factors. The ``Required knowledge'' column denotes the information an algorithm requires as an input. The ``Computation'' column roughly denotes the time complexity with ``Efficient'' meaning an algorithm is designed to solve a problem using minimal resources, ``Inefficient'' -- an algorithm consumes more time than necessary, and ``Intractable'' -- an algorithm for which no known polynomial-time solution exists.}
\resizebox{\textwidth}{!}{%
  \centering
  \begin{threeparttable}
  \begin{tabular}{lccccccc}
    \hline
    &&& Constraint && Required \\ 
    & \multirow{-2}{*}{Algorithm} & \multirow{-2}{*}{Main Regret} & violation & \multirow{-2}{*}{CMDP class} & knowledge & \multirow{-2}{*}{Computation} \\ 

    \hline

    & \textsc{C-UCRL} &&&& safe policy $\pi$ &\\
      
    & \cite{pmlr-v120-zheng20a} &  \multirow{-2}{*}{$\tilde{O}(mSAT^{3/4} )$} & \multirow{-2}{*}{0} & \multirow{-2}{*}{ergodic} & and $p$ & \multirow{-2}{*}{efficient} \\ 
   \cline{2-8}
    
     & \textsc{UCRL-CMDP} &  &&&&& \\
     
    & \cite{Singh_CMDP_2020} & \multirow{-2}{*}{$\tilde{O}(T_M\sqrt{SA}T^{2/3} )$} & \multirow{-2}{*}{$\tilde{O}(T_M\sqrt{SA}T^{2/3} )$} & \multirow{-2}{*}{ergodic} & \multirow{-2}{*}{$T$} & \multirow{-2}{*}{inefficient} \\
   \cline{2-8}

   & Alg. 3 &&& weakly &  &\\
    & \cite{chen_2022_optimisQlearn} & \multirow{-2}{*}{$\tilde{O}(sp(p)(S^2 A T^2)^{1/3}  )$} & \multirow{-2}{*}{$\tilde{O}( sp(p)(S^2 A T^2)^{1/3}  )$} & communicating & \multirow{-2}{*}{$sp(p)$, $T$} & \multirow{-2}{*}{inefficient} \\

     \cline{2-8}
    \multirow{-8}{*}{\rotatebox{90}{\parbox[c]{4cm}{\centering frequentist}}} & Alg. 4 &&& weakly &   & \\

    & \cite{chen_2022_optimisQlearn} & \multirow{-2}{*}{$\tilde{O}(sp(p)S \sqrt{AT} )$} & \multirow{-2}{*}{$\tilde{O}( sp(p)S \sqrt{AT} )$} & communicating & \multirow{-2}{*}{$sp(p)$, $T$} & \multirow{-2}{*}{intractable} \\ 

     \cline{0-7}

     &\textsc{CMDP-PSRL}&&&&&&  \\
    & \cite{Agarwal_2021_PSRL} &  \multirow{-2}{*}{$\tilde{O}( T_M S\sqrt{AT})$} & \multirow{-2}{*}{$\tilde{O}( T_M S \sqrt{AT})$} & \multirow{-2}{*}{ergodic} & \multirow{-2}{*}{-} & \multirow{-2}{*}{efficient} \\ 
   
   \cline{2-8}
    \multirow{-4}{*}{\rotatebox{90}{\parbox[c]{2.55cm}{\centering Bayesian}}} & \textsc{PSConRL}  &&&&&& \\
    & (this paper) &  \multirow{-2}{*}{$\tilde{O}( DS \sqrt{AT})$} & \multirow{-2}{*}{$\tilde{O}( DS \sqrt{AT})$} & \multirow{-2}{*}{communicating} & \multirow{-2}{*}{-} & \multirow{-2}{*}{efficient} \\
   \hline
   \hline
    &lower bound&&&&&&  \\
    & \cite{Singh_CMDP_2020} &  \multirow{-2}{*}{$\Omega( \sqrt{DSAT})$} & \multirow{-2}{*}{$\Omega( \sqrt{DSAT})$} & \multirow{-2}{*}{-} & \multirow{-2}{*}{-} & \multirow{-2}{*}{-} \\
    \hline
  \end{tabular}
   \end{threeparttable}}
\label{tbl:extended_comparison}
\vspace*{-1.2\baselineskip}
\end{table*}

Learning in CMDP in the \textit{infinite-horizon average reward} setting appears to be more challenging because it depends on the limiting behavior of the underlying stochastic process, and approaches for analyzing this setting vary with the class structure of CMDPs. 
For instance,  \citet{Singh_CMDP_2020} and \citet{pmlr-v120-zheng20a} consider a restricted class of \textit{ergodic} CMDPs. In ergodic CMDPs, any policy will reach every state after a sufficient number of steps, making them self-explorative and easier to learn than general cases. Nevertheless, achieving near-optimal regret bounds is still non-trivial under constraints, and the proposed algorithms only achieve suboptimal regret bounds: with \textsc{UCRL-CMDP} \cite{Singh_CMDP_2020} achieving $\Tilde{O}(T^{2/3})$ regret and cost violation bound and \textsc{C-UCRL} \citep{pmlr-v120-zheng20a} achieving $\Tilde{O}(T^{3/4})$ regret bound with no cost violations. 
In contrast, \citet{chen_2022_optimisQlearn} consider a broad class of \textit{weakly communicating} CMDPs, which allows more interesting practical scenarios. They propose two algorithms in this more general setting, albeit imposing impractical assumptions about knowledge of some problem-specific parameters. The first algorithm is computationally tractable but theoretically suboptimal, only achieving $\Tilde{O}(T^{2/3})$ regret and cost violation bounds; the second is an intractable algorithm with near-optimal regret and cost violation bounds of $\tilde{O} ( \sqrt{T} )$. The main theoretical results for this setting are summarized in Table \ref{tbl:extended_comparison}.

In this paper, we propose a practical \textit{and} efficient algorithm based on the posterior sampling principle \cite{THOMPSON_1933}. This principle involves maintaining a posterior distribution for the unknown parameters and guides the exploration by the variance of the distribution. The posterior sampling principle underpins many algorithms in reinforcement learning \cite{Osband_PSRL2013, AY_bayesian_control_2015, NIPS2017_Shipra_OPSRL, TS_MDP_Ouyang_2017}.

\textbf{Our main contribution} is a posterior sampling-based algorithm (\textsc{PSConRL}), which achieves near-optimal Bayesian regret bounds while being computationally efficient. Drawing inspiration from the algorithmic design structure of \citet{TS_MDP_Ouyang_2017}, the algorithm proceeds in episodes with two stopping criteria. At the beginning of every episode, it samples transition probability vectors from a posterior distribution for every state-action pair. The key idea of the algorithm is to switch to efficient exploration whenever the sampled transitions are infeasible, which we show to be necessary for communicating CMDPs. When sampled transitions are feasible, the algorithm solves for the optimal policy by utilizing a linear program (LP) in the space of occupancy measures that incorporates constraints directly \cite{Altman99constrainedmarkov}. The optimal policy computed for the sampled CMDP is used throughout the episode. Under a Bayesian framework, we show that the expected regret and cost violation of our algorithm accumulated up to time $T$ is bounded by $\tilde{O} ( DS\sqrt{AT} )$ for any communicating CMDP with $S$ states, $A$ actions, and diameter $D$. 

A closely related study by \citet{Agarwal_2021_PSRL} analyzes the long-term average Bellman error in constrained optimization to address potential infeasibility issues of posterior sampling. They achieve the Bayesian regret and cost violation bounds of $\tilde{O} (T_MS\sqrt{AT})$, where $T_M$ is the mixing time.\footnote{Mixing time can be arbitrarily loose compared to the diameter, e.g., $T_M \sim O(D^S)$ for some problem instances \cite{regal_2009}.} However, they focus on the ergodic CMDP structure, and, as detailed in Sections \ref{subsec:counterexample}, their method cannot be applied to communicating CMDPs. 

Thus, the main result of the paper shows that near-optimal Bayesian regret bounds are achievable in constrained reinforcement learning. To the best of our knowledge, this is the first work to obtain a computationally tractable algorithm with near-optimal regret bounds for the infinite-horizon average reward setting when underlying CMDP is communicating. Additionally, simulation results demonstrate that our algorithm significantly outperforms existing approaches for three CMDP benchmarks.

The rest of the paper is organized as follows.  Section \ref{sec:problem_formulation} is devoted to the methodological setup and contains the problem formulation.  The \textsc{PSConRL} algorithm is introduced in Section \ref{sec:algorithm}. Analysis of the algorithm is presented in Section \ref{sec:regret_bound}, which is followed by numerical experiments in Section \ref{sec:experiments}. Section \ref{sec:lit_review} briefly reviews the previous related work. Finally, we conclude with Section \ref{sec:conclusion}.

\section{Problem formulation}
\label{sec:problem_formulation}
\subsection{Constrained Markov Decision Processes}
A constrained MDP model is defined as a tuple $M = (\mathcal{S}, \mathcal{A}, p, \textit{\textbf{c}}, \tau)$ where $\mathcal{S}$ is the state space, $\mathcal{A}$ is the action space, $p : \mathcal{S} \times \mathcal{A} \xrightarrow{} \Delta^{\mathcal{S}}$ is the transition function, with $\Delta^{\mathcal{S}}$ indicating simplex over $\mathcal{S}$, $\textit{\textbf{c}} : \mathcal{S} \times \mathcal{A} \xrightarrow{} [0, 1]^{m+1}$ is the cost vector function, and $\tau \in [0,1]^m$ is a cost threshold. In general, CMDP is an MDP with multiple cost functions ($c_0, c_1,\dots,c_m$), one of which, $c_0$, is used to set the optimization objective, while the others, ($c_1,\dots,c_m$), are used to restrict what policies can do. A stationary policy $\pi$ is a mapping from state space $\mathcal{S}$ to a probability distribution on the action space $\mathcal{A}$, $\pi : \mathcal{S} \xrightarrow{} \Delta^{\mathcal{A}}$, which does not change over time.  Let $S = |\mathcal{S}|$ and $A = |\mathcal{A}|$, where $|\cdot|$ denotes the cardinality.

For transitions $p$ and a scalar cost function $c$, a stationary policy $\pi$ induces a Markov chain, and the expected infinite-horizon average cost (\textit{loss}) for state $s \in \mathcal{S}$ is defined as
\begin{equation}
    J^{\pi}(s; c, p) = \overline{\lim}_{T \to \infty} \frac{1}{T} \sum_{t=1}^T \mathbb{E}_{p}^{\pi} \left [ c(s_t,a_t) \lvert s_0 = s \right ],
\end{equation}
where $\mathbb{E}_{p}^{\pi}$ is the expectation under the probability measure $\mathbb{P}_{p}^{\pi}$ over the set of infinitely long state-action trajectories. $\mathbb{P}_{p}^{\pi}$ is induced by policy $\pi$, transition function $p$, and the initial state $s$. Given some fixed initial state $s$  and  $\tau_1, \dots , \tau_m \in \mathbb{R}$ , the CMDP optimization problem is to find a policy  $\pi$  that minimizes $J^\pi(s; c_0,p)$ subject to the constraints  $J^\pi(s; c_i,p) \leq \tau_i, i = 1, \dots, m$:
\begin{equation}
    \min_\pi J^\pi(s; c_0, p)  \text{ s.t. }  J^\pi(s; c_i, p) \leq \tau_i,\, i=1,\dots,m \,. 
    \label{eq:objective_cost}
\end{equation}

\textbf{Communicating CMDPs.} To control the regret vector (defined below), we consider the subclass of communicating CMDPs. Formally, define the diameter of CMDP with transitions $p$ as the minimum time required to go from one state to another in the CMDP using some stationary policy:
\begin{equation*}
    D(p) = \max_{s \neq s^{\prime}} \min_{\pi:\mathcal{S} \to \Delta^{\mathcal{A}}} T^{\pi}_{s \to s^{\prime}},
\end{equation*}
where $T^{\pi}_{s \to s^{\prime}}$ is the expected number of steps to reach state $s^{\prime}$ when starting from state $s$ and using policy $\pi$. CMDP is communicating if and only if it has a finite
diameter, that is to say, for every pair of states $s$ and $s^{\prime}$ there exists a stationary policy under which $s^{\prime}$ is accessible from $s$ in at most $D(p)$ steps, for some finite $D(p) \geq 0$.

We define $\Omega_*$ to be the set of all transitions $p$ such that the CMDP with transition probabilities $p$ is communicating, and there exists a number $D$ such that $D(p) \leq D$. We will focus on CMDPs
with transition probabilities in set $\Omega_*$.

Next, by \cite{Puterman_mdp}[Theorem 8.2.6], for scalar cost function $c$, transitions $p$ that corresponds to communicating CMDP, and stationary policy $\pi$, there exists a bias function $v(s;c,p)$ satisfying the \textit{Bellman equation} for all $s \in \mathcal{S}$:
\begin{align}
\label{eq:bellman}
    & J^{\pi}(s; c, p) + v^{\pi}(s; c, p) \\
    & = \sum_{a \in \mathcal{A}} \pi(a|s) \left [c(s,a) + \sum_{s' \in \mathcal{S}} p(s'|s,a) v^{\pi}(s'; c, p) \right ]. \notag
\end{align}
If $v$ satisfies the Bellman equation, $v$ plus any constant also satisfies the Bellman equation. Furthermore, the loss of the optimal stationary policy $\pi_{\ast}$ does not depend on the initial state, i.e., $J^{\pi_{\ast}}(s; c, p) = J^{\pi_{\ast}}(c, p)$, as presented in \cite{Puterman_mdp}[Theorem 8.3.2]. Without loss of generality, let $\min_{s \in \mathcal{S}} v^{\pi_*}(s; c_i, p) = 0$, for $i=1, \dots m$, and define the span of the MDP as $sp(p) = \max_{1 \leq i \leq m}\max_{s \in \mathcal{S}}v^{\pi_*}(s; c_i, p)$. Note, if $D(p) \leq D$, then $sp(p) \leq D$ as well \citep{regal_2009}.

\textbf{Linear programming for solving CMDPs.} When CMDP is known, an optimal policy for \eqref{eq:objective_cost} can be obtained by solving the following linear program (LP)\citep{Altman99constrainedmarkov}:
\begin{align}
    \min_{\mu} \sum_{s,a} \mu(s,a) c_0(s,a), \label{eq1}\\
    \mathrm{s.t.}\quad \sum_{s,a} \mu(s,a) c_i(s,a) \leq \tau_i,\, \quad i=1,\dots,m, \\
    \sum_a \mu(s,a) = \sum_{s', a} \mu(s', a) p(s',a,s), \quad \forall s \in \mathcal{S}, \\
    \mu(s,a) \geq 0, \quad \forall (s,a) \in \mathcal{S} \times \mathcal{A}, \quad \sum_{s,a} \mu(s,a) = 1, \label{eq4}
\end{align}
where the decision variable $\mu(s,a)$ is occupancy measure (fraction of visits to $(s,a)$). Given the optimal solution for LP \eqref{eq1}-\eqref{eq4}, $\mu_*(s,a)$, one can construct the optimal stationary policy $\pi_{\ast}(a|s)$ for \eqref{eq:objective_cost} by choosing action $a$ in state $s$ with probability $\frac{\mu_*(s,a)}{\sum_{a'} \mu_*(s,a')}$.

Given the above definitions and results, we can now define the reinforcement learning problem studied in this paper.

\subsection{The reinforcement learning problem}

We study the reinforcement learning problem where an agent interacts with a communicating CMDP $M = (\mathcal{S}, \mathcal{A}, p_{\ast}, \textit{\textbf{c}}, \tau)$. We assume that the agent has complete knowledge of $\mathcal{S}, \mathcal{A}$, and the cost function $\textit{\textbf{c}}$, but not the transitions $p_{\ast}$ or the diameter $D$. This assumption is common for RL literature \citep{regal_2009, NIPS2017_Shipra_OPSRL, whyPSbetter, kalagarla2023safe} and is without loss of generality because the complexity of learning the cost and reward functions is dominated by the complexity of learning the transition probability.

We focus on a Bayesian framework for the unknown parameter $p_{\ast}$. That is, at the beginning of the interaction, the actual transition probabilities $p_{\ast}$ are randomly generated from the prior distribution $f_1$. The agent can use past observations to learn the underlying CMDP model and decide future actions. The goal is to minimize the total cost $\sum_{t=1}^T c_0(s_t, a_t)$ while violating constraints as little as possible, or equivalently, minimize the total regret for the main cost component and auxiliary cost components over a time horizon $T$, defined as
\begin{align*}
    BR_+ (T;c_0) & = \mathbb{E} \left [  \sum_{t=1}^T \big (c_0(s_t, a_t) - J^{\pi_{\ast}}(c_0; p_{\ast}) \big )_+\right ], \\
    BR_+ (T;c_i) & = \mathbb{E} \left [  \sum_{t=1}^T \big (  c_i(s_t, a_t) - \tau_i \big )_+ \right ], i=1,\dots,m,
\end{align*}
where $s_t, a_t, t = 1, \dots , T$, are generated by the agent, $J^{\pi_{\ast}}(c_0; p_{\ast})$ is the optimal loss of the CMDP $M$, and $[x]_+ := \max \{0,x\}$. The above expectation is with respect to the prior distribution $f_1$, the randomness in the state transitions, and the randomized policy.

\subsection{Assumptions}
We introduce two mild assumptions that are common in reinforcement learning literature.

\begin{assumption}
The support of the prior distribution $f_1$ is a subset of $\Omega_*$. That is, the CMDP $M$ is communicating and $D(p_{\ast}) \leq D$.
\label{assum:WASP}
\end{assumption}

This type of assumption is common for the Bayesian framework (see, e.g., \citep{TS_MDP_Ouyang_2017, Agarwal_2021_PSRL}) and is not overly restrictive \citep{regal_2009, chen_2022_optimisQlearn}. In the experiments section, we provide a practical justification for this assumption and show that it can be supported by choosing Dirichlet distribution as a prior.

\begin{assumption}
\label{assum:slater}
    There exists $\gamma > 0$ and unknown policy $\bar{\pi}(\cdot | s) \in \Delta^{\mathcal{A}}$ such that $J^{\bar{\pi}}(c_i, p_{\ast}) \leq \tau_i - \gamma$ for all $i \in \{1, \dots, m \}$, and without loss of generality, we assume under such policy $\bar{\pi}$, the Markov chain resulting from the CMDP is irreducible and aperiodic.
\end{assumption}

The first part of the assumption is standard in constrained reinforcement learning (see, e.g., \citep{Efroni_2020_CMDP, DingWYWJ21}) and is mild as we do not require the knowledge of such policy. The second part is without loss of generality due to \citet{Puterman_mdp}[Proposition 8.3.1] and \citet{Puterman_mdp}[Proposition 8.5.8]. By imposing this assumption, we can control the sensitivity of problem \eqref{eq:objective_cost} to the deviation between the true and sampled transitions. Later, we will use this assumption to guarantee that the minimization problem in Eq. \eqref{eq:objective_cost} becomes feasible under the sampled transitions.

\section{\textsc{PSConRL}: Learning algorithm for constrained reinforcement learning}
\label{sec:algorithm}
In this section, we propose the Posterior Sampling for Constrained Reinforcement Learning (\textsc{PSConRL}) algorithm. Our algorithm is based on an intuitive idea of constructing an adaptive exploration mechanism to address the feasibility issues. It maintains posteriors for the transition function and combines the steps of solving LP through the lens of occupancy measure with the construction of exploration MDPs (whenever LP is infeasible). Below, we describe the main components of our algorithm, which is summarized in Algorithm \ref{alg1:psrl_transitions}.

\textbf{Bayes rule.}
At each timestep $t$, given history $h_t$, the agent can compute posterior distribution $f_t$ given by $f_t(\mathcal{P}) = \prob(p_{\ast} \in \mathcal{P} \lvert h_t)$ for any set $\mathcal{P}$. Upon applying action $a_t$ and observing a new state $s_{t+1}$, the posterior distribution at $t + 1$ can be updated according to Bayes’ rule as
\begin{equation}
\label{eq:bayes_rule}
    f_{t+1}(dp) = \frac{p(s_{t+1} | s_t, a_t) f_t(dp)}{\int p^{\prime}(s_{t+1}|s_t, a_t) f_t(dp^{\prime})}.
\end{equation}
The key challenge of posterior sampling is that neither problem in Eq. \eqref{eq:objective_cost} nor LP \eqref{eq1}-\eqref{eq4} are guaranteed to be feasible under the sampled transitions $p_t \sim f_t$, and it is unclear how the agent should proceed if LP \eqref{eq1}-\eqref{eq4} is infeasible. As we show in Lemma \ref{lm:feasibility}, after sufficient exploration, LP \eqref{eq1}-\eqref{eq4} becomes feasible with high probability (when each state-action pair is visited $\sqrt{T/A}$ times). Therefore, whenever LP \eqref{eq1}-\eqref{eq4} is infeasible, the agent switches to efficient exploration by constructing shortest path policies for a set of MDPs described below.

\textbf{Reduction to a set of exploration MDPs.} To facilitate efficient exploration, we introduce a set of MDPs, denoted as $\{(\mathcal{S}, \mathcal{A}, p_t, c_s)\}_{s \in \mathcal{S}}$. Each MDP in this set retains the original state and action spaces, with the transition function \(p_t \sim f_t\) and a state-dependent cost function \(c_s\), defined as
\begin{equation*}
    c_s(s^{\prime}, a) = \begin{cases}
    1, &\text{if $s^{\prime} \neq s$;}\\
    0, &\text{otherwise.}
    \end{cases}
\end{equation*}

Consider a specific target state $\Bar{s}$ and its corresponding MDP $M^t_{\Bar{s}} = (\mathcal{S}, \mathcal{A}, p_t, c_{\Bar{s}})$. Note, MDP $M^t_{\Bar{s}}$ is communicating with a scalar cost function, and, from MDP theory, we know that there exists an optimal policy $\pi^t_{\Bar{s}}$ that satisfies the Bellman optimality equation:
\begin{align}
\label{eq:opt_bellman}
    & J^*(c_{\Bar{s}}, p_t) + v^*(s; c_{\Bar{s}}, p_t) \\
    & = \min_{a \in \mathcal{A}} \left \{ c_{\Bar{s}}(s,a) + \sum_{s' \in \mathcal{S}} p_t(s'|s,a) v^*(s'; c_{\Bar{s}}, p_t) \right \}, \forall s \in \mathcal{S}. \notag
\end{align}
In essence, the optimal policy $\pi^t_{\Bar{s}}$ corresponds to a policy that efficiently guides the agent through the MDP toward the target state $\Bar{s}$, thereby enabling efficient exploration. The formalization of this intuitive concept will be presented in Section \ref{sec:regret_bound}.

\subsection{Algorithm description} 
\textsc{PSConRL} begins with a prior distribution over transitions $f_1$ and proceeds in episodes. Let $N_t(s,a)$ denote the number of visits to $(s,a)$ before time $t$ and $N_t(s)$ denote the number of visits to $s$. We use two stopping criteria of \citet{TS_MDP_Ouyang_2017} for episode construction. The rounds $t= 1,...,T$ are broken into consecutive episodes as follows: the $k$-th episode begins at the round $t_k$ immediately after the end of $(k - 1)$-th episode and ends at the first round $t$ such that (i) $N_{t}(s,a) \geq 2 N_{t_k}(s,a)$ or (ii) $t \leq t_k + T_{k-1}$ for some state-action pair $(s,a)$, where $T_k = t_{k+1} - t_k$ is the length of episode $k$. The first criterion is the doubling trick of \citet{JMLR:v11:jaksch10a} and ensures the algorithm has visited some state-action pair $(s,a)$ at least the same number of times it had visited this pair $(s,a)$ before episodes $k$ started. The second criterion controls the growth rate of episode length and is believed to be necessary under the Bayesian setting \citep{TS_MDP_Ouyang_2017}.

At the beginning of episode $k$, a parameter $p_k$ is sampled from the posterior distribution $f_{t_k}$, where $t_k$ is the start of the $k$-th episode. During each episode $k$, actions are generated from the optimal stationary policy $\pi_k$ for the sampled parameter $p_k$, which is observed either by solving LP \eqref{eq1}-\eqref{eq4} (if it is feasible) or by recovering the shortest path policy for a state with minimum visitations to it. 
Using Assumption \ref{assum:slater}, we will show that eventually, after $O ( \sqrt{T} )$ steps, the sampled CMDP becomes feasible, and the algorithm will effectively compute $\pi_k$ by solving LP \eqref{eq1}-\eqref{eq4}.

\begin{remark}
    Note that \textsc{PSConRL} only requires the knowledge of $\mathcal{S}$, $\mathcal{A}$, $\textbf{\textit{c}}$, and the prior distribution $f_1$. It does not require the knowledge of the horizon $T$, or the bias span $sp(p)$ as in \citet{Singh_CMDP_2020} and \citet{chen_2022_optimisQlearn}.
\end{remark}

\begin{algorithm}[tb]
   \caption{Posterior Sampling for Constrained Reinforcement Learning (\textsc{PSConRL})}
   \label{alg1:psrl_transitions}
\begin{algorithmic}[1]
    \STATE {\bfseries Input:} $f_1$

    \STATE \textbf{Initialization:} $t \gets 1$, $t_k \gets 0$, $\pi_0(\cdot) \gets \frac{1}{\lvert \mathcal{A} \rvert}$
    \FOR{episodes $k = 1, 2, \dots$}
        \STATE $T_{k-1} \gets t - t_k$ 
        \STATE $t_k \gets t$
        \STATE Generate $p_k(\cdot|s,a) \sim f_{t_k}$
        \IF{ LP (\ref{eq1})-(\ref{eq4}) is feasible under $p_k(\cdot|s,a)$}
            \STATE Compute $\pi_k(\cdot)$ by solving LP (\ref{eq1})-(\ref{eq4})
        \ELSE
            \STATE Select $s$ with minimum number of visits $N_{t_k}(s)$ 
            \STATE Compute $\pi_k(\cdot)$ by solving Eq. \eqref{eq:opt_bellman} for MDP $M_s$
        \ENDIF
        \REPEAT
            \STATE Apply action $a_t = \pi_k(s_t)$
            \STATE Observe new state $s_{t+1}$
            \STATE Update counter $N_t(s_t,a_t)$
            \STATE Update $f_{t+1}$ according to Eq. \eqref{eq:bayes_rule}
            \STATE $t \gets t + 1$
        \UNTIL{ $t \leq t_k + T_{k-1}$ and $N_t(s,a) \leq 2 N_{t_k}(s,a)$} for some $(s,a) \in \mathcal{S} \times \mathcal{A}$
    \ENDFOR
\end{algorithmic}
\end{algorithm}

\subsection{Importance of additional exploration for communicating CMDPs}
\label{subsec:counterexample}
In this subsection, we highlight the importance of reducing the problem to the exploration MDPs within our algorithm. In contrast to our approach, \textsc{PSRL-CMDP} \citep{Agarwal_2021_PSRL} exclusively solves LP \eqref{eq1}-\eqref{eq4} for the optimal solution, and in cases when the optimal solution is infeasible, they opt to disregard constraints and proceed with the unconstrained problem. They argue that, eventually, the LP becomes feasible due to the self-exploratory properties of ergodic CMDPs. Unfortunately, this argument does not hold for communicating CMDPs, as demonstrated by the following example.

\begin{figure*}[htpb]
\hfill
\subfigure[Symbolic illustration of Example \ref{counterexample}.]{
    \includegraphics[width=.29\textwidth]{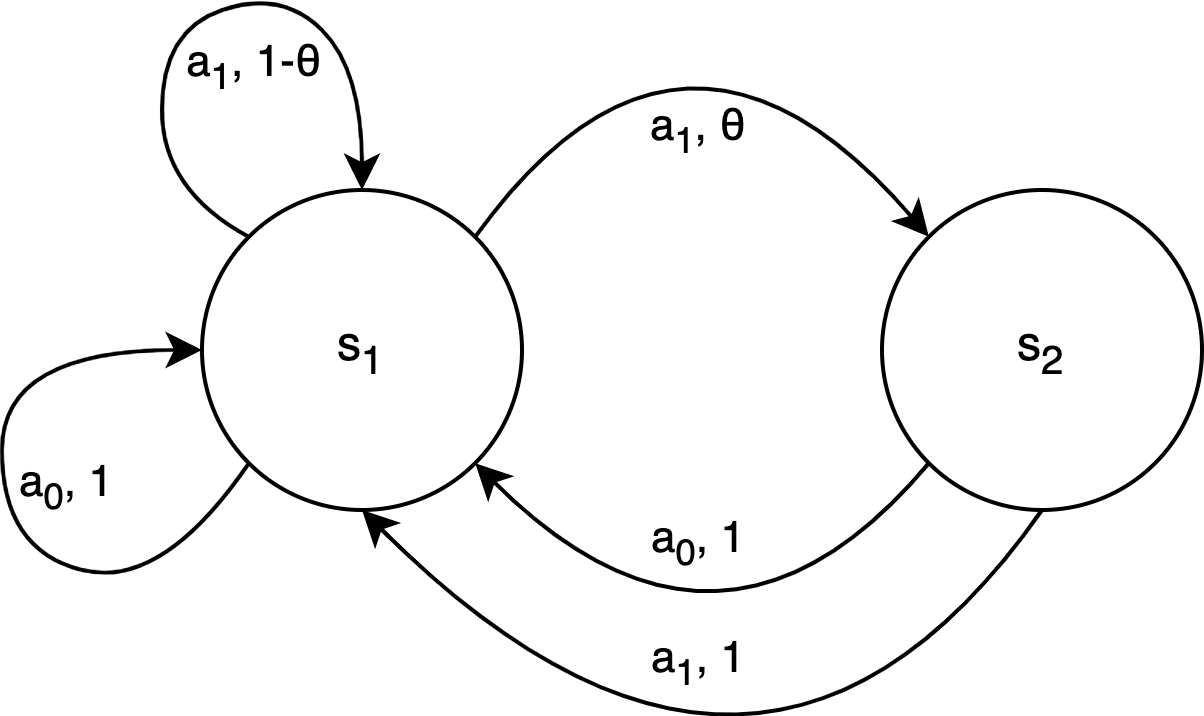}
    \label{fig:counterexample_cmdp}}
\hfill
\subfigure[Simulation results for Example \ref{counterexample}.]{
    \includegraphics[width=.6\textwidth]{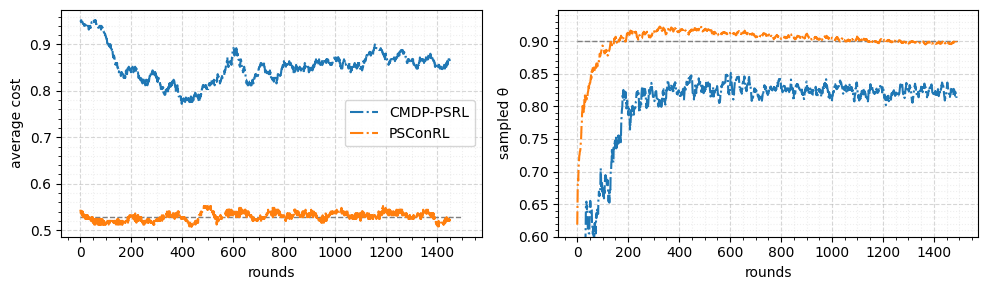}
    \label{fig:counterexample_simulations}}
\hfill
\caption{CMDP illustration and results of the experiments for Example \ref{counterexample}, with $\theta=0.9$ and the average cost threshold $\tau=0.5275$. Figure \ref{fig:counterexample_cmdp} represents the CMDP in symbolic form. Figure \ref{fig:counterexample_simulations} presents average cost (left), and realizations of  $\tilde{\theta}$ (right). Results are averaged over 5 runs.}
\vspace*{-1.\baselineskip}
\end{figure*}

\begin{example}
    \label{counterexample}
    Consider a two-state $\mathcal{S} = \{s_0, s_1\}$, two-action $\mathcal{A} =\{a_0, a_1\}$ CMDP in which the controlled transition probabilities $p_{\ast}(s_0, a_1, s_1) = \theta$ and $p_{\ast}(s_0, a_1, s_0) = 1 - \theta$ are unknown, while remaining probabilities are $p_{\ast}(s_0, a_0, s_1) = 1$, $p_{\ast}(s_1, \cdot, s_1) = 1$ and known. See Figure \ref{fig:counterexample_cmdp} for illustration. Assume that $r(s_0, \cdot)=1$, $c(s_0, \cdot) = 1$ and $r(s_1, \cdot)=0$, $c(s_1, \cdot) = 0$, i.e., reward and cost depend only upon the current state. Further, let $\theta=0.9$ and the average cost threshold $\tau=0.5275$ (the lowest possible budget that corresponds to a feasible problem). Note that this CMDP is not ergodic because starting from $s_0$ and utilizing a policy that chooses action $a_0$ will never visit state $s_1$. Also, such a policy would clearly correspond to an optimal solution in case there is no budget constraint.
\end{example} 

For two algorithms, \textsc{PSConRL} (ours) and \textsc{PSRL-CMDP} \citep{Agarwal_2021_PSRL}, we demonstrate their performance through simulations on this toy CMDP. For both algorithms, we set the prior distribution to $Beta(1, 1)$ with the parameters of the distribution being the number of visitations to $(s_0, a_0)$ and $(s_0, a_1)$ state-action pairs. At each timestep, we sample the plausible parameter $\tilde{\theta}$. Whenever the sampled CMDP is infeasible, we utilize the optimal policy for the unconstrained problem for \textsc{PSRL-CMDP} (policy that chooses action $a_0$ all the time) and the shortest path policy according to Eq. \eqref{eq:opt_bellman} for \textsc{PSConRL} (policy that chooses action $a_1$ all the time). Note that the sampled CMDP is infeasible every time $\tilde{\theta} < \theta$, due to the choice of cost threshold. 

Figure \ref{fig:counterexample_simulations} demonstrates the results of the experiment. Specifically, we present the average cost (left) and realizations of $\tilde{\theta}$ (right). Taking a closer look at the average cost subplot (left), we can see that \textsc{PSConRL} consistently fluctuates around the cost threshold and, overall, satisfies the constraint of the problem, whereas \textsc{PSRL-CMDP} severely violates the constraint. Moving to the right subplot, it is evident that \textsc{PSConRL} successfully learns the true value of parameter $\theta$, while \textsc{PSRL-CMDP} fails to do so. 

A series of assumptions in \cite{Agarwal_2021_PSRL} makes a Markov chain induced by any policy aperiodic, recurrent, and irreducible. Such favorable properties make any CMDP self-exploratory, meaning that for a sampled CMDP, a policy that solely maximizes the main reward (regardless of constraints) will sufficiently explore the environment and, eventually, collect enough information to find the true optimal solution. However, this does not hold for communicating CMDPs and necessitates additional exploration to ensure feasibility. As such, a more involved theoretical analysis is required to address this issue for communicating CMDPs.

\section{Regret bound}
We now provide our main result for the \textsc{PSConRL} algorithm for learning in CMDPs.

\begin{theorem}
\label{thm:regret_bound}
     For any communicating CMDP $M$ with $S$ states, $A$ actions, under Assumptions \ref{assum:WASP} and \ref{assum:slater}, for $T \geq \Omega((D/\gamma)^4 S^2 A \log^2(2AT))$, the Bayesian regret for main and auxiliary cost components of Algorithm \ref{alg1:psrl_transitions} are bounded:
    \begin{align*}
        & BR_+ (T;c_i) \leq O \left ( DS \sqrt{AT \log (AT)} \right ), i = 0, \dots, m. 
    \end{align*}
    Here $O(\cdot)$ notation hides only the absolute constant.E
\end{theorem}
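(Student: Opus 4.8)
The plan is to decompose the Bayesian regret into episodes and, within each episode, into a ``feasible'' phase (where $\pi_k$ solves the LP for the sampled $p_k$) and an ``exploration'' phase (where $\pi_k$ is a shortest-path policy). The first key step is to invoke Lemma~\ref{lm:feasibility}: once every state-action pair has been visited roughly $\sqrt{T/A}$ times, the sampled LP is feasible with high probability, so the total time spent in exploration phases is $\tilde O(S\sqrt{AT})$ deterministically (each target state needs $O(\sqrt{T/A})$ visits, $S$ states, and reaching a target costs $O(D)$ steps via the shortest-path policy), which already contributes $\tilde O(DS\sqrt{AT})$ to the regret for every cost component since per-step costs are bounded by $1$. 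I would also bound the number of episodes: the two stopping criteria (doubling of some $N(s,a)$, and $T_k \le t_k + T_{k-1}$) give $K = O(\sqrt{SAT\log T})$ episodes, exactly as in \citet{TS_MDP_Ouyang_2017,JMLR:v11:jaksch10a}.

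For the feasible phases, the core is the standard posterior-sampling identity: conditioned on the history at $t_k$, $p_k$ and $p_*$ are identically distributed, so $\ee[J^{\pi_k}(c_i;p_k)] = \ee[J^{\pi_*}(c_i;p_*)]$ for $i=0$ and $\ee[J^{\pi_k}(c_i;p_k)] \le \tau_i$ for $i\ge 1$ (the latter because $\pi_k$ is LP-feasible for $p_k$). This reduces each cost component's regret to $\ee\big[\sum_k \sum_{t\in \text{episode }k}(c_i(s_t,a_t) - J^{\pi_k}(c_i;p_k))\big]$ plus the dropped positive parts, which I control by the usual Bellman-equation telescoping: writing $c_i(s_t,a_t) - J^{\pi_k}(c_i;p_k) = v^{\pi_k}(s_t;c_i,p_k) - \sum_{s'} p_k(s'|s_t,a_t) v^{\pi_k}(s';c_i,p_k)$ via Eq.~\eqref{eq:bellman}, I split into (a) a martingale-difference term $\sum_t [v^{\pi_k}(s_{t+1}) - \sum_{s'}p_*(s'|s_t,a_t)v^{\pi_k}(s')]$ with zero conditional mean and bounded increments $\le sp(p_k) \le D$ (here Assumption~\ref{assum:WASP} ensures $p_k \in \Omega_*$, hence span at most $D$), giving $\tilde O(D\sqrt{T})$ by Azuma-Hoeffding; (b) a transition-mismatch term $\sum_t \sum_{s'} (p_*(s'|s_t,a_t) - p_k(s'|s_t,a_t)) v^{\pi_k}(s';c_i,p_k)$; and (c) boundary terms $v^{\pi_k}(s_{t_k}) - v^{\pi_k}(s_{t_{k+1}})$ summing over episodes to $O(KD) = \tilde O(DS\sqrt{AT})$. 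Term (b) is handled by concentration of the empirical transitions: on the good event that $\|p_*(\cdot|s,a) - \hat p_{t}(\cdot|s,a)\|_1 \lesssim \sqrt{S\log(AT)/N_t(s,a)}$, and since both $p_k$ and $p_*$ are concentrated around the same posterior mean, $\|p_* - p_k\|_1$ obeys the same bound in expectation; multiplying by $\|v^{\pi_k}\|_\infty \le D$ and summing $\sum_{s,a}\sum_t N_t(s,a)^{-1/2} \lesssim \sqrt{SAT}$ yields $\tilde O(DS\sqrt{AT})$.

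The remaining piece is the dropped positive-part slack: $BR_+$ truncates at zero, so I must argue the algorithm does not overshoot the constraints (or the objective) by more than the regret already bounded. For $i\ge 1$ this follows because in feasible phases $J^{\pi_k}(c_i;p_k)\le\tau_i$, so the only excess over $\tau_i$ comes from the same telescoping/martingale/mismatch terms above; for $i=0$ we use $\ee[J^{\pi_k}(c_0;p_k)] = \ee[J^{\pi_*}(c_0;p_*)]$, and crucially Assumption~\ref{assum:slater} (Slater point with margin $\gamma$) ensures the sampled problem's optimal value is not much larger than the true one once $\|p_k - p_*\|$ is small — this is where the $T\ge\Omega((D/\gamma)^4 S^2 A\log^2(2AT))$ threshold enters, guaranteeing $\|p_k-p_*\|_1$ is small enough that a perturbation of the LP stays feasible and near-optimal (a sensitivity/Lipschitz argument for the occupancy-measure LP, with the Lipschitz constant scaling like $D/\gamma$).

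I expect the main obstacle to be precisely this LP-sensitivity step: showing that a small $\ell_1$ perturbation of the transition kernel changes the constrained optimal average cost by only $O((D/\gamma)\|p_k - p_*\|_1)$, and simultaneously that feasibility is preserved, requires carefully relating perturbations of the occupancy-measure polytope (whose vertices can move a lot near-degenerately) to the average cost, using the Slater margin $\gamma$ to bound how far one must move into the interior. Controlling this uniformly over the random $p_k$ — and threading it together with the high-probability concentration event so that the low-probability complement contributes only $o(1)$ after multiplying by the trivial $O(T)$ bound — is the delicate part; everything else is the by-now-standard posterior-sampling regret machinery adapted from \citet{TS_MDP_Ouyang_2017}.
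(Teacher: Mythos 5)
Your overall architecture is the paper's: the same two stopping criteria with $K_T=O(\sqrt{SAT\log T})$ episodes, the same split into a feasible phase (LP solution for $p_k$) and an exploration phase (shortest-path policies), the Exploration lemma giving an $O(DS\sqrt{AT})$ bound on time spent under-explored, the Feasibility lemma kicking in once every $(s,a)$ has $\sqrt{T/A}$ visits, the low-probability ``infeasible despite being well-visited'' event killed by the $L_1$ concentration of \citet{JMLR:v11:jaksch10a} combined with the stopping-time posterior-sampling lemma, and the Ouyang-style Bellman telescoping ($R_0$ via exchangeability, martingale term, transition-mismatch term, $O(DK_T)$ boundary terms) on the good event.

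The one place you diverge---and where you would get stuck---is the step you yourself flag as the main obstacle: the LP-sensitivity/Lipschitz argument showing the sampled CMDP's optimal value is within $O((D/\gamma)\|p_k-p_*\|_1)$ of the true one. The paper needs no such argument, and this is precisely the advantage of the Bayesian analysis over optimistic methods: since $\pi_k$ is a measurable function of $p_k$ and the history, Lemma \ref{lm:posterior_lemma} gives $\ee[J^{\pi_k}(c_0;p_k)]=\ee[J^{\pi_*}(c_0;p_*)]$ directly (your own first identity), which already handles the $R_0$ term; asking in addition that the sampled optimum be pathwise close to the true optimum is redundant and much harder. Relatedly, you misattribute the role of the threshold $T\geq\Omega((D/\gamma)^4S^2A\log^2(2AT))$: in the paper, $\gamma$ enters only through Lemma \ref{lm:feasibility}, which is a one-sided perturbation bound on the gain of the \emph{fixed} Slater policy $\bar\pi$ (via the limiting-matrix identity $(P^k_{\bar\pi})^*(P^k_{\bar\pi}-P^*_{\bar\pi})v^{\bar\pi,p_*}\leq D\delta\mathbf{1}$, using Assumption \ref{assum:slater} for irreducibility), showing $J^{\bar\pi}(c_i,p_k)\leq\tau_i$ so that the sampled LP is merely \emph{feasible}. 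No statement about near-optimality of the perturbed LP, and no control of the occupancy-measure polytope, is ever required. Two smaller points: the exploration-phase bound is an expectation (via the tail-sum formula and $\ee[T_e]\leq DS\sqrt{AT}$ from Lemma \ref{lm:diam_bound_for_explor_MDP}), not deterministic; and for the auxiliary components $i\geq1$ the paper simply discards the $R_0$ term because $J^{\pi_k}(c_i;p_k)\leq\tau_i$ on the good event, exactly as you propose.
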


\begin{remark}
    The regret bound closely matches the theoretical lower bound of $\Omega(\sqrt{DSAT})$. Also, the provided bound matches the best bound for the undiscounted setting without constraints. We emphasize that the $O(\sqrt{DS})$ gap between lower and upper bounds remains an open question for the undiscounted setting with and without constraints. 
\end{remark}

\label{sec:regret_bound}


The full proof of Theorem \ref{thm:regret_bound} is presented in the Appendix \ref{app:main_thm_proof}. In the remainder of this section, we introduce three lemmas that are pivotal to our analysis and present a proof sketch for Theorem \ref{thm:regret_bound}.

\subsection{Key lemmas}

A key property of posterior sampling is that conditioned on the information at time $t$, the transition functions $p_{\ast}$ and $p_t$ have the same distribution if $p_t$ is sampled from the posterior distribution at time $t$ \cite{Osband_PSRL2013}. Since the \textsc{PSConRL} algorithm samples $p_k$ at the stopping time $t_k$, we use the stopping time version of the posterior sampling property stated as follows.

\begin{lemma}[Posterior sampling lemma; adapted from Lemma 1 of \cite{jafarniajahromi2021online}]
Let $t_k$ be a stopping time with respect to the filtration $\left ( \mathcal{F}_t \right )_{t=1}^{\infty}$, and $p_k$ be the sample drawn from the posterior distribution at time $t_k$. Then, for any measurable function $g$ and any $\mathcal{F}_{t_k}$-measurable random variable $X$, we have 
\label{lm:posterior_lemma}
\begin{align*}
    \ee \left [ g(p_k, X) \right ] = \ee \left [ g(p_{\ast}, X) \right ].
\end{align*}
\end{lemma}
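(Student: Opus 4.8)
The plan is to reduce the stopping-time statement to the elementary ``distribution-matching'' property of posterior sampling at a \emph{deterministic} time, and then to lift it to the random time $t_k$ by partitioning over the events $\{t_k = t\}$, $t \geq 1$. The deterministic-time fact is immediate from the definition of the posterior: for a fixed $t$, conditioned on $\mathcal{F}_t$ the true transition function $p_{\ast}$ has law $f_t$, and by construction the parameter sampled at time $t$ is also distributed according to $f_t$ given $\mathcal{F}_t$ (the sampling noise being independent of the past given $\mathcal{F}_t$). Since an $\mathcal{F}_t$-measurable random variable acts as a constant under $\mathbb{E}[\,\cdot \mid \mathcal{F}_t]$, it would follow that for any such variable $Y$ both $\mathbb{E}[g(p_k, Y)\mid\mathcal{F}_t]$ and $\mathbb{E}[g(p_{\ast}, Y)\mid\mathcal{F}_t]$ equal $\int g(p', Y)\,f_t(dp')$, hence coincide.

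\textbf{Key steps.} First I would use that $t_k < \infty$ almost surely to write
\[
\mathbb{E}\bigl[g(p_k,X)\bigr] \;=\; \sum_{t=1}^{\infty}\mathbb{E}\bigl[g(p_k,X)\,\mathbf{1}\{t_k = t\}\bigr],
\]
with the interchange of sum and expectation justified by monotone convergence when $g \geq 0$ (and by dominated convergence under the implicit integrability of $g(p_k,X)$ in general). Second, fixing $t$ and setting $Y_t := X\,\mathbf{1}\{t_k = t\}$, I would observe that $t_k$ being a stopping time gives $\{t_k = t\}\in\mathcal{F}_t$ and $Y_t$ being $\mathcal{F}_t$-measurable, and that $g(p_k,X)\mathbf{1}\{t_k=t\} = g(p_k,Y_t)\mathbf{1}\{t_k=t\}$; conditioning on $\mathcal{F}_t$ and pulling out the indicator, the $t$-th summand becomes $\mathbb{E}\bigl[\mathbf{1}\{t_k=t\}\,\mathbb{E}[g(p_k,Y_t)\mid\mathcal{F}_t]\bigr]$. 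Third, I would invoke the deterministic-time identity with $Y_t$ in place of $Y$ to replace the inner conditional expectation by the same expression with $p_{\ast}$, and then undo the conditioning and re-sum over $t$, using $\sum_t\mathbf{1}\{t_k=t\}=1$ a.s., to arrive at $\sum_t\mathbb{E}[g(p_{\ast},X)\mathbf{1}\{t_k=t\}] = \mathbb{E}[g(p_{\ast},X)]$, which is the claim.

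\textbf{Main obstacle.} The substance here is entirely the measure-theoretic bookkeeping around the random index $t_k$, and that is the part I expect to require the most care: one must use the stopping-time property to justify $\{t_k = t\}\in\mathcal{F}_t$ (so that the indicator may be treated as $\mathcal{F}_t$-measurable and pulled through $\mathbb{E}[\cdot\mid\mathcal{F}_t]$, and so that $Y_t$ is genuinely $\mathcal{F}_t$-measurable), confirm that $t_k$ is a.s.\ finite so the partition $\{t_k = t\}_{t\geq 1}$ is exhausting, and --- most delicately --- verify that on $\{t_k = t\}$ the conditional law of the drawn sample $p_k$ given $\mathcal{F}_t$ is still exactly the posterior $f_t$, which relies on the sample being generated with auxiliary randomness independent of the past given $\mathcal{F}_{t_k}$. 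Once these points are secured, the chain of equalities above closes and yields $\mathbb{E}[g(p_k,X)] = \mathbb{E}[g(p_{\ast},X)]$. This is precisely the stopping-time refinement of the classical posterior-sampling identity and may alternatively be cited verbatim from Lemma~1 of \cite{jafarniajahromi2021online}.
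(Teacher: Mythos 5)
The paper does not prove this lemma itself; it imports it verbatim from Lemma~1 of \cite{jafarniajahromi2021online}. Your argument --- partitioning over $\{t_k=t\}$, using $\{t_k=t\}\in\mathcal{F}_t$ and the $\mathcal{F}_t$-measurability of $X\mathbf{1}\{t_k=t\}$ to reduce to the deterministic-time identity, where both $p_{\ast}$ and the freshly drawn sample have conditional law $f_t$ given $\mathcal{F}_t$ --- is correct and is essentially the standard proof used in that reference, so it matches the intended justification.
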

Recall that in every episode $k$, \textsc{PSConRL} runs either an optimal loss policy by solving LP \eqref{eq1}-\eqref{eq4} for the sampled transitions or computes the optimal stationary policy for a fixed finite MDP. In Lemma \ref{lm:feasibility}, we show that problem \eqref{eq:objective_cost} becomes feasible under sampled transitions after sufficient exploration of every state-action pair, i.e., there exists a policy that satisfies constraints in problem \eqref{eq:objective_cost} and Algorithm~\ref{alg1:psrl_transitions} will effectively find an optimal solution for LP \eqref{eq1}-\eqref{eq4}. 

We address the feasibility issues by using the deviation bound between sampled and true transitions and the limiting matrix properties of the resulting Markov chains. Unlike optimistic algorithms \cite{Singh_CMDP_2020, chen_2022_optimisQlearn}, which optimize over a confidence set of plausible transitions, Lemma \ref{lm:feasibility} introduces a computationally efficient approach to deal with feasibility issues.

\begin{lemma}[Feasibility lemma]
\label{lm:feasibility}
    If $N_{t_k}(s,a) \geq \sqrt{T/A}$, $\norm{p_k(\cdot|s,a) - p_{\ast}(\cdot|s,a)}_1 \leq \sqrt{\frac{14S \log (2A T t_k)}{\max \{1, N_{t_k}(s,a)\}}}$ for all $(s,a)$, and $\gamma \geq D \sqrt{\frac{14S A^{1/2} \log (2A T^2)}{\sqrt{T}}}$ there exists policy $\pi$, which satisfies $J^\pi(c_i, p_k) \leq \tau_i$ for all $i \in \{1, \dots, m\}$. 
\end{lemma}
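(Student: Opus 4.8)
The plan is to use Assumption~\ref{assum:slater} as the anchor: there is a policy $\bar\pi$ that is strictly feasible under the true transitions $p_{\ast}$, with slack $\gamma$, i.e. $J^{\bar\pi}(c_i, p_{\ast}) \leq \tau_i - \gamma$ for every $i$. The goal is to show that this \emph{same} policy $\bar\pi$ (or a policy close to it) is feasible, though with no slack required, under the sampled transitions $p_k$, i.e. $J^{\bar\pi}(c_i, p_k) \leq \tau_i$. Since $\bar\pi$ is a witness for LP feasibility under $p_k$, this establishes the lemma. The whole argument therefore reduces to a perturbation bound: controlling $|J^{\bar\pi}(c_i, p_k) - J^{\bar\pi}(c_i, p_{\ast})|$ in terms of the transition deviation $\|p_k(\cdot|s,a) - p_{\ast}(\cdot|s,a)\|_1$, and then checking that under the stated hypotheses this deviation is smaller than $\gamma$.

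First I would fix a cost index $i$ and the policy $\bar\pi$, and write the average costs $J^{\bar\pi}(c_i, p_k)$ and $J^{\bar\pi}(c_i, p_{\ast})$ via the stationary distributions $\mu_k$ and $\mu_{\ast}$ of the Markov chains that $\bar\pi$ induces on $p_k$ and $p_{\ast}$ respectively; then $J^{\bar\pi}(c_i, p) = \sum_s \mu(s) \sum_a \bar\pi(a|s) c_i(s,a)$, so $|J^{\bar\pi}(c_i, p_k) - J^{\bar\pi}(c_i, p_{\ast})| \leq \|\mu_k - \mu_{\ast}\|_1$ because costs lie in $[0,1]$. The second part of Assumption~\ref{assum:slater} guarantees that the chain induced by $\bar\pi$ under $p_{\ast}$ is irreducible and aperiodic, so it has a well-defined unique stationary distribution and a finite mixing/fundamental-matrix quantity; combined with Assumption~\ref{assum:WASP} (diameter $\leq D$), one gets a sensitivity bound of the form $\|\mu_k - \mu_{\ast}\|_1 \leq \kappa \cdot \max_{s}\|P_k^{\bar\pi}(\cdot|s) - P_{\ast}^{\bar\pi}(\cdot|s)\|_1$, where $\kappa$ is controlled by $D$ (using the standard fact that the deviation of a stationary distribution under a Markov-chain perturbation is bounded by the perturbation times the condition number of the fundamental matrix, which for a chain of diameter $\leq D$ is $O(D)$). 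The row-wise perturbation of the induced kernels is in turn bounded by $\max_{s,a}\|p_k(\cdot|s,a) - p_{\ast}(\cdot|s,a)\|_1$, so altogether $|J^{\bar\pi}(c_i, p_k) - J^{\bar\pi}(c_i, p_{\ast})| \leq O(D) \cdot \max_{s,a}\|p_k(\cdot|s,a) - p_{\ast}(\cdot|s,a)\|_1$.

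Next I would plug in the hypothesized deviation bound. Since $N_{t_k}(s,a) \geq \sqrt{T/A}$ for all $(s,a)$, the per-pair bound gives $\|p_k(\cdot|s,a) - p_{\ast}(\cdot|s,a)\|_1 \leq \sqrt{14 S \log(2 A T t_k) / \sqrt{T/A}}$, and crudely bounding $t_k \leq T$ yields $\leq \sqrt{14 S A^{1/2} \log(2 A T^2)/\sqrt{T}}$. Multiplying by the $O(D)$ factor and comparing against the hypothesis $\gamma \geq D\sqrt{14 S A^{1/2}\log(2AT^2)/\sqrt{T}}$ shows that, up to the absolute constant hidden in $O(D)$ (which I would track explicitly so the $14$ inside the root matches), the perturbation is at most $\gamma$. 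Hence $J^{\bar\pi}(c_i, p_k) \leq J^{\bar\pi}(c_i, p_{\ast}) + \gamma \leq (\tau_i - \gamma) + \gamma = \tau_i$ for every $i$, so $\bar\pi$ is feasible under $p_k$ and the LP is feasible; taking $\pi = \bar\pi$ proves the claim.

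The main obstacle is the perturbation bound for the stationary distribution, i.e. justifying $\|\mu_k - \mu_{\ast}\|_1 = O(D)\cdot(\text{kernel deviation})$ cleanly. Two subtleties need care: (i) $\bar\pi$ is only assumed irreducible/aperiodic under $p_{\ast}$, not under $p_k$, so the chain on $p_k$ may be reducible and its "stationary distribution" ill-defined — I would handle this by working with the Cesàro-limit (gain) matrix $P^{*}$ rather than a stationary vector, or by noting that for transitions close enough to $p_{\ast}$ the perturbed chain inherits a unique recurrent class, and then invoking a Markov-chain perturbation theorem (e.g. via the Deng--Mei-style or Cho--Meyer bound, or the bias-function/Poisson-equation route using $sp(p)\leq D$) to get the condition number $O(D)$; (ii) one must make sure the constant coming out of this perturbation bound is truly absolute and that it combines with the $\sqrt{14}$ and the $D$ in the hypothesis to give exactly $\leq \tau_i$ rather than $\leq \tau_i + (\text{constant})\cdot(\cdots)$ — so the cleanest route is probably to state the sensitivity lemma with the precise constant that makes the arithmetic close, likely the reason the hypothesis on $\gamma$ carries that specific $14 S A^{1/2}\log(2AT^2)/\sqrt{T}$ form.
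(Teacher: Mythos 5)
Your overall strategy is exactly the paper's: take the Slater policy $\bar\pi$ from Assumption~\ref{assum:slater}, bound the gain perturbation $J^{\bar\pi}(c_i,p_k)-J^{\bar\pi}(c_i,p_\ast)$ by $\gamma$ under the stated deviation hypotheses, and conclude $J^{\bar\pi}(c_i,p_k)\le \tau_i-\gamma+\gamma=\tau_i$, so $\bar\pi$ itself witnesses feasibility. The arithmetic you do with the hypotheses ($N_{t_k}(s,a)\ge\sqrt{T/A}$ and $t_k\le T$ giving a uniform per-pair deviation $\delta=\sqrt{14SA^{1/2}\log(2AT^2)/\sqrt{T}}$, then $D\delta\le\gamma$) is also exactly what the paper does.

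The one genuine gap is in your \emph{primary} route for the perturbation bound. You reduce to $\norm{\mu_k-\mu_\ast}_1$ and invoke a stationary-distribution sensitivity theorem whose condition number you claim is ``$O(D)$ for a chain of diameter $\le D$.'' That condition number (a norm of the fundamental matrix or group inverse of the chain induced by $\bar\pi$) is governed by the mixing behaviour of that specific chain, not by the CMDP diameter; as the paper itself notes, mixing times can be of order $D^S$, so no absolute-constant bound $\kappa=O(D)$ follows from the diameter alone, and your arithmetic would then fail to close against the stated $\gamma$ threshold. The route that works --- and is the paper's actual proof, which you mention only as a fallback --- is the Poisson-equation identity: write the Bellman equation for $\bar\pi$ under $p_\ast$ in vector form, $c_{\bar\pi}-J^{\bar\pi,p_\ast,c}\mathbf{1}=(I-P^\ast_{\bar\pi})v^{\bar\pi,p_\ast}$, and left-multiply by the limiting matrix $(P^k_{\bar\pi})^{\ast}$ of the sampled chain to get
\[
J^{\bar\pi}(c,p_k)-J^{\bar\pi}(c,p_\ast)\;=\;(P^k_{\bar\pi})^{\ast}\bigl(P^k_{\bar\pi}-P^{\ast}_{\bar\pi}\bigr)v^{\bar\pi,p_\ast}\;\le\;\delta\,\norm{v^{\bar\pi,p_\ast}}_\infty\,\mathbf{1}\;\le\;D\delta\,\mathbf{1},
\]
with constant exactly $1$, which is precisely what the hypothesis $\gamma\ge D\delta$ is calibrated to. This also handles your subtlety (i): one works with the limiting (Ces\`aro) matrix of $P^k_{\bar\pi}$, which is row-stochastic and satisfies $(P^k_{\bar\pi})^{\ast}P^k_{\bar\pi}=(P^k_{\bar\pi})^{\ast}$, rather than needing a perturbation theorem for a stationary vector (the paper additionally asserts irreducibility of $P^k_{\bar\pi}$ from Assumption~\ref{assum:slater} to write $(P^k_{\bar\pi})^{\ast}=\mathbf{1}\boldsymbol{q}^{\intercal}$). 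So: right anchor and right bookkeeping, but the quantitative lemma must be the bias-span/Poisson-equation version, not the fundamental-matrix condition-number version.
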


Next, in Lemma \ref{lm:exploration_lemma}, we prove that \textsc{PSConRL} explores the environment efficiently, whenever LP \eqref{eq1}-\eqref{eq4} is infeasible, and requires $O(DS\sqrt{AT})$ to visit each state-action pair $\sqrt{T/A}$ times.

\begin{lemma}[Exploration lemma]
\label{lm:exploration_lemma}
    Define set $\mathcal{G} = \left \{ p \in \Omega_*: \exists \pi \text{ s.t. } J^\pi(c_i; p) \leq \tau_i, \forall i \in \{1, \dots, m\} \right \}$. Whenever $\pi_k$ is computed as an optimal policy for Eq.~\eqref{eq:opt_bellman}, i.e., $ p_k \notin \mathcal{G} $, the average number of timesteps to visit each state-action pair $\sqrt{T/A}$ times is bounded by $2DS\sqrt{AT}+1$. Formally, 
    \begin{align*}
        \sum_{k:t_k \leq T} \ee &  \left [ \sum_{t=t_k}^{t_{k+1}-1} \mathbb{I} \left \{ \exists (s,a):  N_{t_k}(s,a) < \sqrt{T} \right \} \left.\right| p_k \notin \mathcal{G} \right ] \\
        & \leq 2DS\sqrt{AT} + 1.
    \end{align*}
\end{lemma}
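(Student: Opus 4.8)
The plan is to bound the total exploration time by analyzing each episode in which $p_k \notin \mathcal{G}$ separately, and then summing. Fix such an episode $k$ starting at $t_k$. By construction, $\pi_k$ is the optimal policy for the shortest-path MDP $M^{t_k}_{\bar{s}}$, where $\bar{s}$ is a state with minimal visit count $N_{t_k}(\bar{s})$. The first step is to show that for this target state $\bar{s}$, the optimal average cost $J^*(c_{\bar{s}}, p_k)$ is small: since the sampled CMDP $p_k \in \Omega_*$ is communicating with diameter at most $D$, there is a policy reaching $\bar{s}$ from any state in expected time at most $D$, hence a policy that returns to $\bar{s}$ with expected recurrence time at most $D+1$; this gives $J^*(c_{\bar{s}}, p_k) \le \frac{D}{D+1} \le 1 - \frac{1}{D+1}$, or more crudely a bound showing the long-run fraction of time spent outside $\bar{s}$ is controlled. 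Equivalently, under $\pi_k$ on the \emph{sampled} chain $p_k$, the stationary probability of $\bar{s}$ is at least $\frac{1}{D+1}$.

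The second step transfers this to the \emph{true} chain $p_\ast$. Using Lemma~\ref{lm:posterior_lemma} with the stopping time $t_k$ (so that $p_k$ and $p_\ast$ have the same conditional law given $\mathcal{F}_{t_k}$, and $\pi_k$, $\bar{s}$ are $\mathcal{F}_{t_k}$-measurable given the sample), the expected behavior of $\pi_k$ run on $p_\ast$ matches that on $p_k$ in distribution. Concretely, I would argue that the expected number of visits to $\bar{s}$ during episode $k$ under the true dynamics is at least a constant fraction $\frac{T_{k-1}}{D+1}$ of the episode length $T_{k-1}$ (using that, by the second stopping criterion, the episode runs for at least $T_{k-1}$ steps, and that the shortest-path policy reaches $\bar{s}$ quickly so the transient phase costs at most $O(D)$). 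Since $\bar{s}$ was chosen with the minimum visit count $N_{t_k}(\bar{s}) \le \frac{1}{S}\sum_s N_{t_k}(s) = \frac{t_k - 1}{S}$, and since some state-action pair still has fewer than $\sqrt{T}$ visits as long as the indicator in the statement is $1$, we obtain a recurrence/potential argument: each such episode increases $\min_s N_{t}(s)$ by a definite amount relative to its length.

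The third step assembles these into the global bound. Let $K$ be the (random) number of episodes with $p_k \notin \mathcal{G}$ and the indicator active. Because each of the $S$ states needs to reach count $\sqrt{T}$, and each exploration episode devoted to target $\bar{s}$ contributes, in expectation, at least a $\frac{1}{D+1}$-fraction of its length toward visits to the currently-least-visited state, the total length of these episodes is at most roughly $(D+1)\cdot S \sqrt{T}$ up to the overhead of transient phases; more carefully, summing $\ee[\sum_{t=t_k}^{t_{k+1}-1} \mathbb{I}\{\cdots\}]$ over all such $k$, each episode's length is comparable to $N_{t_k}(\bar s)+O(D)$ by the doubling criterion, and the cumulative visit budget across all states is $S\sqrt{T}$, yielding the claimed $2DS\sqrt{AT} + 1$ after accounting for the $\sqrt{A}$ from $\sqrt{T} = \sqrt{AT}/\sqrt{A}$ bookkeeping and the additive $+1$ for a possible final incomplete episode.

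\textbf{Main obstacle.} The delicate point is Step 2: controlling the expected number of visits to $\bar{s}$ \emph{during a single episode of bounded length} on the true chain, rather than in the stationary limit. The average-cost optimality of $\pi_k$ on $p_k$ only controls the Ces\`aro limit; transferring a finite-horizon statement requires bounding the bias/transient term, which is where the diameter $D$ (equivalently $sp(p_k) \le D$) enters, and where Lemma~\ref{lm:posterior_lemma} must be invoked carefully so that the expectation over the posterior sample is handled correctly. Ensuring the transient overhead is only additive $O(D)$ per episode — and not multiplicative — is what makes the final bound come out as $\tilde O(DS\sqrt{AT})$ rather than something worse.
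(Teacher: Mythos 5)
Your high-level counting matches the paper's — each trip to the least-visited target state costs at most $D$ expected steps by the diameter property, and the total visit budget is $SA\cdot\sqrt{T/A}=S\sqrt{AT}$ — but the route you take through Step 2 opens a gap that the paper's proof deliberately avoids. You phrase the per-episode guarantee via the average cost $J^*(c_{\bar s},p_k)$ and the stationary probability of $\bar s$, and then must convert a Ces\`aro-limit statement into a visit count over a single finite episode; you correctly flag this transient/bias control as the main obstacle, but you do not resolve it. The paper never passes through the stationary distribution: its auxiliary Lemma~\ref{lm:diam_bound_for_explor_MDP} proves directly that the shortest-path policy satisfies $T^{\pi_{\bar s}}_{s\to\bar s}\le D$ from every starting state (via a Bellman-optimality contradiction plus the aperiodicity transformation), and the main proof then defines $T_e=\min\{t: N_t(s,a)\ge\sqrt{T/A}\ \forall(s,a)\}$ and bounds $\ee[T_e]\le DS\sqrt{AT}$ as a sum of expected hitting times, so no finite-horizon-versus-limit conversion is ever needed. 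The episodic bookkeeping is also different: the factor $2$ and the $+1$ come from rewriting $\sum_k\sum_{t=t_k}^{t_{k+1}-1}\prob(\cdot)$ using $T_k\le T_{k-1}+1$ together with the monotonicity of $\prob(T_e>t)$, which collapses the double sum to $1+2\ee[T_e]$ by the tail-sum formula — not from a final incomplete episode.

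Two further concrete problems. First, you assert that ``by the second stopping criterion, the episode runs for at least $T_{k-1}$ steps''; that criterion only caps the episode length at $T_{k-1}+1$ from above, and the doubling criterion can terminate the episode much earlier, so your per-episode fraction argument cannot assume a minimum episode length. Second, the sampled-versus-true-chain transfer you flag is real: Lemma~\ref{lm:diam_bound_for_explor_MDP} bounds the hitting time under the sampled transitions $p_k$, while $\pi_k$ is executed on $p_*$, and this must be reconciled through Lemma~\ref{lm:posterior_lemma} with $t_k$ a stopping time. The paper itself handles this only implicitly, so you are not worse off than the paper on that point, but your proposal leaves its own self-identified central step unproven rather than supplying the argument.
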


Lemma \ref{lm:exploration_lemma} plays a crucial role in facilitating efficient exploration. It ensures that the deviation between sampled and true transitions becomes sufficiently small, thereby satisfying the conditions outlined in Lemma \ref{lm:feasibility}. Importantly, our exploration mechanism requires overall $O(\sqrt{T})$ steps, whereas the existing approaches designate $O(T^{2/3})$ steps for exploration in constrained problems and only achieve suboptimal regret of $\tilde{O}(T^{2/3})$, e.g., UCRL-CMDP \cite{Singh_CMDP_2020} and Alg. 3 \cite{chen_2022_optimisQlearn}. Only Alg. 4 \cite{chen_2022_optimisQlearn} allocates $O(\sqrt{T})$ steps for exploration, leading to near-optimal regret bound. However, this exploration scheme renders their algorithm intractable.

The Feasibility and Exploration lemmas form one of the main novel components of the analysis of Theorem \ref{thm:regret_bound}.

\subsection{Proof Sketch of Theorem \ref{thm:regret_bound}}
     Below, we show a proof sketch of the main theorem for the main regret component. The proof for auxiliary cost components is deferred to the Appendix \ref{app:main_thm_proof}.

    We decompose the total regret into the sum of episodic regrets conditioned on the event that the sampled CMDP is feasible:
    \begin{align}
        BR_+ (T;c_0) 
        & =  \ee \left [ \sum_{t=1}^T \left ( c_0(s_t, a_t) - J^{\pi_*} \right )_+ \right ] 
        \notag
        \\ 
        = \sum_{k=1}^{K_T} & \ee \left [ \sum_{t} \left [ c_0(s_t, a_t) - J^{\pi_{\ast}} \right ] \lvert p_k \in \mathcal{G} \right ] \prob \left ( p_k \in \mathcal{G} \right ) 
        \notag
        \\
        + \sum_{k=1}^{K_T} & \ee \left [ \sum_{t} \left [ c_0(s_t, a_t) - J^{\pi_{\ast}} \right \lvert p_k \notin \mathcal{G} \right ] \prob \left ( p_k \notin \mathcal{G} \right ),
        \notag
    \end{align}
    where $J^{\pi_*} = J^{\pi_*}(c_0; p_{\ast})$ is the optimal loss of CMDP $M$, $K_T$ is the number of episodes, and $\mathcal{G}$ is defined in the statement of Lemma \ref{lm:exploration_lemma}.

    For the first term, conditioned on the good event, $\{ p_k \in \mathcal{G} \}$, the sampled CMDP is feasible, and the standard analysis of \citet{TS_MDP_Ouyang_2017} can be applied. Lemma \ref{lm:regret_on_good_event} shows that this term can be bounded by $(D+1) \sqrt{2SAT \log (T)} + 49DS \sqrt{AT \log (AT)}$.

    As for the second term, we further decompose it conditioned on two events: $A_1 = \{p_k \notin \mathcal{G} \wedge  N_{t_k}(s,a) \geq \sqrt{T/A}, \forall s, a \}$ and $A_2 = \{p_k \notin \mathcal{G} \wedge \exists (s,a):  N_{t_k}(s,a) < \sqrt{T/A}\}$. Using the Feasibility lemma, we then show that $\prob(A_1)$ is bounded by $2/15Tt_k$ for each $k$, and the total regret corresponding to event $A_1$ is negligible.

    Next, conditioned on $A_2$, we can utilize the Exploration lemma and show that $\sum_{k} \ee \left [ \sum_{t} \left [ c_0(s_t, a_t) - J^{\pi_{\ast}} \right \lvert A_2 \right ] \prob \left ( A_2 \right ) < 2DS\sqrt{AT} + 1$, due to the efficient exploration property of our algorithm.

    Putting all bounds together, we obtain the resulting regret bound of:
    \begin{align*}
        BR_+ (T;c_0) & 
        \leq O \left ( DS \sqrt{AT \log (AT)} \right ).
    \end{align*}

\section{Simulation results}
\label{sec:experiments}

\begin{figure*}[htb]
    \centering
    \includegraphics[width=0.99\textwidth]{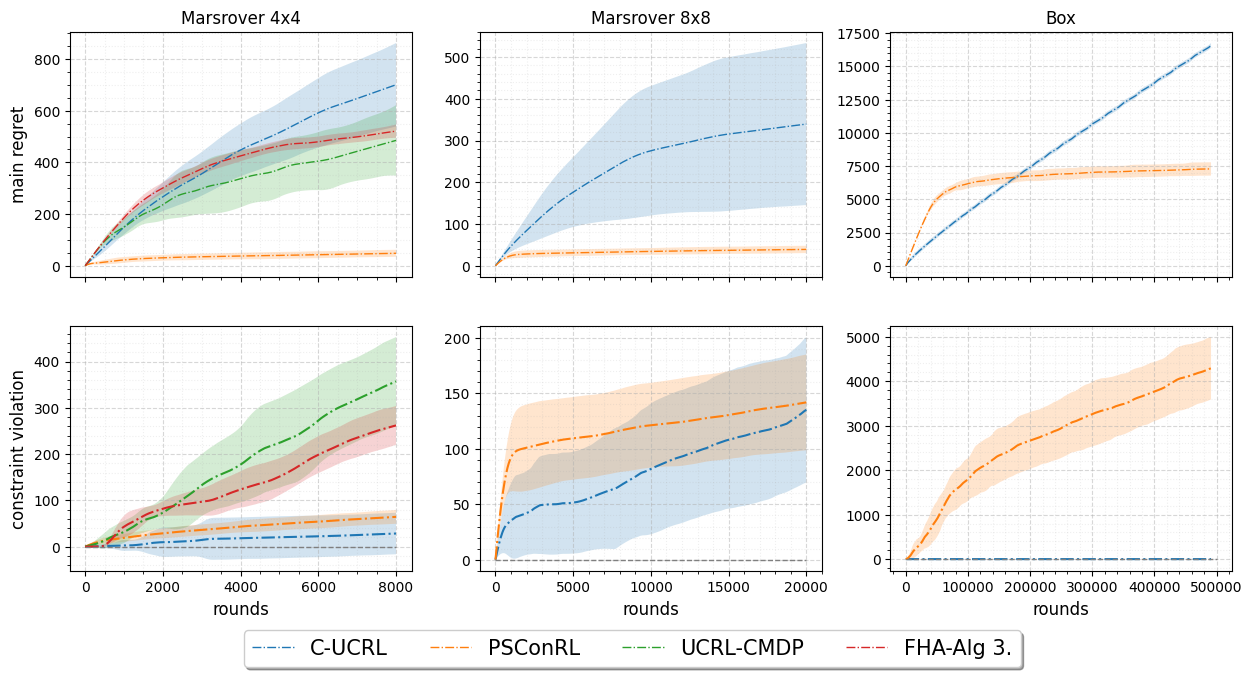}
    \caption{The main regret and constraint violation of the algorithms as a function of the horizon for Marsrover 4x4 (left column), Marsrover 8x8 (middle column), and Box (right column). \textbf{(Top row)} shows the cumulative regret of the main cost component. \textbf{(Bottom row)} shows the cumulative constraint violation. Results are averaged over 50 runs for Marsrover 4x4 and over 30 runs for Marsrover 8x8 and Box. Results for \textsc{UCRL-CMDP} and \textsc{FHA} (Alg. 3) are averaged over 10 runs for Marsrover 4x4.}
    \label{fig:sum_results}
\vspace*{-1.\baselineskip}
\end{figure*}

In this section, we evaluate the performance of \textsc{PSConRL}. The source code of the experiments can be found at \url{https://github.com/danilprov/cmdp}.

We present \textsc{PSConRL} using Dirichlet priors with parameters $[0.1, \dots, 0.1]$. The Dirichlet distribution is a convenient choice for maintaining posteriors for the transition probability vectors $p(s, a)$ since it is a conjugate prior for categorical and multinomial distributions. Moreover, Dirichlet prior is proven to be highly effective for any underlying MDP in unconstrained problems \cite{whyPSbetter}. 

We employ three algorithms as baselines: \textsc{C-UCRL} \citep{pmlr-v120-zheng20a}, \textsc{UCRL-CMDP} \citep{Singh_CMDP_2020}, and \textsc{FHA} (Alg. 3) from \citep{chen_2022_optimisQlearn}. Both Alg. 4 of \citep{chen_2022_optimisQlearn} and \textsc{PSRL-CMDP} of \citep{Agarwal_2021_PSRL} are omitted from the empirical analysis due to their practical inapplicability. For additional information about the baselines, see Appendix \ref{apx:benchmarks}.

We run our experiments on three gridworld environments: Marsrover 4x4, Marsrover 8x8 \cite{pmlr-v120-zheng20a}, and Box \cite{Leike_aisafegrid_2017}. To enable fair comparison, all algorithms were extended to the unknown reward/costs and unknown probability transitions setting (see Appendix \ref{apx:experiments} for more experimental details). Figure \ref{fig:sum_results} illustrates the simulation results of all algorithms across three benchmark environments. The top row shows the cumulative regret of the main cost component. The bottom row presents the cumulative constraint violation. 

We first analyze the behavior of the algorithm on Marsrover environments (left and middle columns). The cumulative regret (top row) shows that \textsc{PSConRL} consistently outperforms all three algorithms. Looking at the cumulative constraint violation (bottom row), we see that \textsc{PSConRL} is comparable with \textsc{C-UCRL}, the only algorithm that addresses safe exploration. In the Box example (right column), \textsc{PSConRL} significantly outperforms \textsc{C-UCRL}, which incurs near-linear regret. We note that exploration is relatively costly in this benchmark compared to Marsrover environments (see the difference on the $x$ and $y$-axes in the top row), which suggests that \textsc{C-UCRL} might be impractical in (at least some) problems where exploration is non-trivial. In Figure \ref{fig:sum_results_app}, we further elaborate on the average performance of the algorithms interpreting regret behavior.

We note the pronounced computational inefficiency of \textsc{UCRL-CMDP} and \textsc{FHA} (Alg. 3) algorithms. 
\textsc{UCRL-CMDP} involves optimization not only across the space of occupancy measures but also across the set of plausible CMDPs, resulting in an exhausting non-linear program. In the most favorable scenario, it requires $O((S^2A)^4)$ operations per episode. On the other hand, \textsc{FHA} (Alg. 3) maintains linearity in its main optimization program but necessitates solving it at each timestep, leading to time complexity of $O((SAT^{1/3})^2)$ per episode. Although both algorithms enjoy polynomial time complexity, the undesirable dependence on problem parameters makes them impractical even for moderate-sized problems. Due to these substantial drawbacks, we have limited their implementation to the Marsrover 4x4 environment.

Additionally, we would like to point out that in these examples, CMDPs are fixed and not generated from the Dirichlet prior. 
Therefore, we conjecture that \textsc{PSConRL} has the same regret bounds under a non-Bayesian setting.

\section{Related work}
\label{sec:lit_review}

Several algorithms based on the \textit{optimism in the face of uncertainty} (OFU) principle have been proposed for constrained RL problems. For the episodic setting, both \citet{Efroni_2020_CMDP} and \citet{NEURIPS2020_Brantley} consider sample efficient exploration utilizing a double optimism principle. As previously mentioned, \citet{Singh_CMDP_2020} and \citet{chen_2022_optimisQlearn} study OFU-based algorithms in the infinite-horizon average reward setting. It is worth mentioning that OFU-based algorithms often involve optimization across a set of plausible models (see, e.g., \cite{Efroni_2020_CMDP, Singh_CMDP_2020}), which makes them computationally less appealing.

Another line of closely related works investigates safe RL, addressing constrained reinforcement learning problems with constant or zero constrain violation guarantees. Several algorithms were proposed in episodic setting \citep{Liu_2021, wei2021provablyefficient, kalagarla2023safe}, with \citet{kalagarla2023safe} focusing on posterior sampling algorithm for safe reinforcement learning. In the infinite-horizon average reward setting, the safe RL problem was previously analyzed in \citep{pmlr-v120-zheng20a, chen_2022_optimisQlearn}. Notably, safe RL algorithms often assume that the transition model and/or safe policy are known.

Among other related work, Lagrangian relaxation is a widely adopted technique for solving CMDPs. The works of \cite{Achiam_CMDP2017, tessler2018reward} present constrained policy optimization approaches that demonstrate prominent successes in artificial environments. However, these approaches are notoriously sample-inefficient and lack theoretical guarantees. More scalable versions of the Lagrangian-based methods were proposed in \cite{Chow_CMDP2018, qiu_2021_cmdp_posterior, chen2021primaldual, provodin2022empirical}. In general, the Lagrangian relaxation method can achieve high performance, but it is sensitive to the initialization of the Lagrangian multipliers and learning rate.

\section{Conclusion}
\label{sec:conclusion}
In this paper, we introduced the \textsc{PSConRL} algorithm for efficient exploration in constrained reinforcement learning under the infinite-horizon average reward criterion. Our algorithm achieves near-optimal Bayesian regret bounds for each cost component while being computationally efficient and easy to implement. By addressing these aspects, \textsc{PSConRL} fills a crucial gap in provably efficient constrained RL. 

\textsc{PSConRL} leverages LP solutions to determine optimal policies and incorporates efficient exploration whenever the sampled CMDP is infeasible. As demonstrated in Section \ref{counterexample}, the empirical comparison between \textsc{PSConRL} and \textsc{CMDP-PSLR} highlights that the exploration step is not merely a technical requirement for proofs but is indeed essential for effective learning in communicating CMDPs.

Finally, we validated our approach using simulations on three gridworld domains and showed that \textsc{PSConRL} quickly converges to the optimal policy even when CMDPs are not sampled from Dirichlet priors, consistently outperforming existing algorithms. Our insights suggest that the use of posterior sampling might be of great value for designing a computationally efficient algorithm with near-optimal frequentist regret bounds. Exploring this direction further is a promising avenue for future work. We also believe that this superior performance extends beyond the scope of gridworld domains to real-life applications.

\section*{Acknowledgements}

This project is partially financed by the Dutch Research Council (NWO) and the ICAI initiative in collaboration with KPN, the Netherlands. The authors thank Pratik Gajane and Thiago D. Simão for discussions on earlier drafts of the paper.

\section*{Impact Statement}

Our work focuses on the theoretical foundations of constrained reinforcement learning, emphasizing practical relevance and applicability. We believe that understanding the theoretical foundations is essential and, when coupled with addressing practically relevant issues, can directly guide the principled and effective application of these methods to real-life problems.

We believe that constraints represent a pivotal limitation in extending RL to real-life problems, such as nuclear fusion, medical treatment, and advertising. Consequently, the potential impact of our work extends to developing the foundations contributing to safe and responsible AI in the context of constrained reinforcement leanring.


\bibliography{main}

\begin{thebibliography}{29}
\providecommand{\natexlab}[1]{#1}
\providecommand{\url}[1]{\texttt{#1}}
\expandafter\ifx\csname urlstyle\endcsname\relax
  \providecommand{\doi}[1]{doi: #1}\else
  \providecommand{\doi}{doi: \begingroup \urlstyle{rm}\Url}\fi

\bibitem[Abbasi-Yadkori \& Szepesv\'{a}ri(2015)Abbasi-Yadkori and Szepesv\'{a}ri]{AY_bayesian_control_2015}
Abbasi-Yadkori, Y. and Szepesv\'{a}ri, C.
\newblock Bayesian optimal control of smoothly parameterized systems.
\newblock In \emph{Proceedings of the Thirty-First Conference on Uncertainty in Artificial Intelligence}, 2015.

\bibitem[Achiam et~al.(2017)Achiam, Held, Tamar, and Abbeel]{Achiam_CMDP2017}
Achiam, J., Held, D., Tamar, A., and Abbeel, P.
\newblock Constrained policy optimization.
\newblock In \emph{Proceedings of the 34th International Conference on Machine Learning}, 2017.

\bibitem[Agarwal et~al.(2022)Agarwal, Bai, and Aggarwal]{Agarwal_2021_PSRL}
Agarwal, M., Bai, Q., and Aggarwal, V.
\newblock Regret guarantees for model-based reinforcement learning with long-term average constraints.
\newblock In \emph{Proceedings of the Thirty-Eighth Conference on Uncertainty in Artificial Intelligence}, 2022.

\bibitem[Agrawal \& Jia(2017)Agrawal and Jia]{NIPS2017_Shipra_OPSRL}
Agrawal, S. and Jia, R.
\newblock Optimistic posterior sampling for reinforcement learning: worst-case regret bounds.
\newblock In \emph{Advances in Neural Information Processing Systems}, 2017.

\bibitem[Altman(1999)]{Altman99constrainedmarkov}
Altman, E.
\newblock Constrained markov decision processes, 1999.

\bibitem[Bartlett \& Tewari(2009)Bartlett and Tewari]{regal_2009}
Bartlett, P.~L. and Tewari, A.
\newblock Regal: A regularization based algorithm for reinforcement learning in weakly communicating mdps.
\newblock In \emph{Proceedings of the Twenty-Fifth Conference on Uncertainty in Artificial Intelligence}, 2009.

\bibitem[Brantley et~al.(2020)Brantley, Dudik, Lykouris, Miryoosefi, Simchowitz, Slivkins, and Sun]{NEURIPS2020_Brantley}
Brantley, K., Dudik, M., Lykouris, T., Miryoosefi, S., Simchowitz, M., Slivkins, A., and Sun, W.
\newblock Constrained episodic reinforcement learning in concave-convex and knapsack settings.
\newblock In \emph{Advances in Neural Information Processing Systems}, 2020.

\bibitem[Chen et~al.(2022)Chen, Jain, and Luo]{chen_2022_optimisQlearn}
Chen, L., Jain, R., and Luo, H.
\newblock Learning infinite-horizon average-reward {M}arkov decision process with constraints.
\newblock In \emph{Proceedings of the 39th International Conference on Machine Learning}, 2022.

\bibitem[Chen et~al.(2021)Chen, Dong, and Wang]{chen2021primaldual}
Chen, Y., Dong, J., and Wang, Z.
\newblock A primal-dual approach to constrained markov decision processes, 2021.

\bibitem[Chow et~al.(2018)Chow, Nachum, Duenez-Guzman, and Ghavamzadeh]{Chow_CMDP2018}
Chow, Y., Nachum, O., Duenez-Guzman, E., and Ghavamzadeh, M.
\newblock A lyapunov-based approach to safe reinforcement learning.
\newblock In \emph{Proceedings of the 32nd International Conference on Neural Information Processing Systems}, 2018.

\bibitem[Ding et~al.(2021)Ding, Wei, Yang, Wang, and Jovanovic]{DingWYWJ21}
Ding, D., Wei, X., Yang, Z., Wang, Z., and Jovanovic, M.~R.
\newblock Provably efficient safe exploration via primal-dual policy optimization.
\newblock In \emph{The 24th International Conference on Artificial Intelligence and Statistics}, 2021.

\bibitem[Efroni et~al.(2020)Efroni, Mannor, and Pirotta]{Efroni_2020_CMDP}
Efroni, Y., Mannor, S., and Pirotta, M.
\newblock Exploration-exploitation in constrained mdps, 2020.

\bibitem[Jafarnia-Jahromi et~al.(2021)Jafarnia-Jahromi, Chen, Jain, and Luo]{jafarniajahromi2021online}
Jafarnia-Jahromi, M., Chen, L., Jain, R., and Luo, H.
\newblock Online learning for stochastic shortest path model via posterior sampling, 2021.

\bibitem[Jaksch et~al.(2010)Jaksch, Ortner, and Auer]{JMLR:v11:jaksch10a}
Jaksch, T., Ortner, R., and Auer, P.
\newblock Near-optimal regret bounds for reinforcement learning.
\newblock \emph{Journal of Machine Learning Research}, 2010.

\bibitem[Kalagarla et~al.(2023)Kalagarla, Jain, and Nuzzo]{kalagarla2023safe}
Kalagarla, K.~C., Jain, R., and Nuzzo, P.
\newblock Safe posterior sampling for constrained mdps with bounded constraint violation, 2023.

\bibitem[Le et~al.(2019)Le, Voloshin, and Yue]{Le2019BatchPL}
Le, H., Voloshin, C., and Yue, Y.
\newblock Batch policy learning under constraints.
\newblock In \emph{Proceedings of the 36th International Conference on Machine Learning}, 2019.

\bibitem[Leike et~al.(2017)Leike, Martic, Krakovna, Ortega, Everitt, Lefrancq, Orseau, and Legg]{Leike_aisafegrid_2017}
Leike, J., Martic, M., Krakovna, V., Ortega, P.~A., Everitt, T., Lefrancq, A., Orseau, L., and Legg, S.
\newblock Ai safety gridworlds, 2017.

\bibitem[Liu et~al.(2021)Liu, Zhou, Kalathil, Kumar, and Tian]{Liu_2021}
Liu, T., Zhou, R., Kalathil, D., Kumar, P., and Tian, C.
\newblock Learning policies with zero or bounded constraint violation for constrained mdps.
\newblock In \emph{Advances in Neural Information Processing Systems}, 2021.

\bibitem[Osband \& Van~Roy(2017)Osband and Van~Roy]{whyPSbetter}
Osband, I. and Van~Roy, B.
\newblock Why is posterior sampling better than optimism for reinforcement learning?
\newblock In \emph{Proceedings of the 34th International Conference on Machine Learning}, 2017.

\bibitem[Osband et~al.(2013)Osband, Russo, and Van~Roy]{Osband_PSRL2013}
Osband, I., Russo, D., and Van~Roy, B.
\newblock (more) efficient reinforcement learning via posterior sampling.
\newblock In \emph{Proceedings of the 26th International Conference on Neural Information Processing Systems}, 2013.

\bibitem[Ouyang et~al.(2017)Ouyang, Gagrani, Nayyar, and Jain]{TS_MDP_Ouyang_2017}
Ouyang, Y., Gagrani, M., Nayyar, A., and Jain, R.
\newblock Learning unknown markov decision processes: A thompson sampling approach.
\newblock In \emph{Proceedings of the 31st International Conference on Neural Information Processing Systems}, 2017.

\bibitem[Provodin et~al.(2022)Provodin, Gajane, Pechenizkiy, and Kaptein]{provodin2022empirical}
Provodin, D., Gajane, P., Pechenizkiy, M., and Kaptein, M.
\newblock An empirical evaluation of posterior sampling for constrained reinforcement learning, 2022.

\bibitem[Puterman(1994)]{Puterman_mdp}
Puterman, M.~L.
\newblock Markov decision processes: Discrete stochastic dynamic programming, 1994.

\bibitem[Qiu et~al.(2020)Qiu, Wei, Yang, Ye, and Wang]{qiu_2021_cmdp_posterior}
Qiu, S., Wei, X., Yang, Z., Ye, J., and Wang, Z.
\newblock Upper confidence primal-dual reinforcement learning for cmdp with adversarial loss.
\newblock In \emph{Advances in Neural Information Processing Systems}, 2020.

\bibitem[Singh et~al.(2023)Singh, Gupta, and Shroff]{Singh_CMDP_2020}
Singh, R., Gupta, A., and Shroff, N.~B.
\newblock Learning in constrained markov decision processes.
\newblock \emph{IEEE Transactions on Control of Network Systems}, 2023.

\bibitem[Tessler et~al.(2019)Tessler, Mankowitz, and Mannor]{tessler2018reward}
Tessler, C., Mankowitz, D.~J., and Mannor, S.
\newblock Reward constrained policy optimization.
\newblock In \emph{International Conference on Learning Representations}, 2019.

\bibitem[Thompson(1933)]{THOMPSON_1933}
Thompson, W.~R.
\newblock {On the likelihood that one unknown probability exceeds another in view of the evidence of two samples}.
\newblock \emph{Biometrika}, 1933.

\bibitem[Wei et~al.(2022)Wei, Liu, and Ying]{wei2021provablyefficient}
Wei, H., Liu, X., and Ying, L.
\newblock A provably-efficient model-free algorithm for infinite-horizon average-reward constrained markov decision processes.
\newblock \emph{Proceedings of the AAAI Conference on Artificial Intelligence}, 2022.

\bibitem[Zheng \& Ratliff(2020)Zheng and Ratliff]{pmlr-v120-zheng20a}
Zheng, L. and Ratliff, L.
\newblock Constrained upper confidence reinforcement learning.
\newblock In \emph{Proceedings of the 2nd Conference on Learning for Dynamics and Control}, 2020.

\end{thebibliography}
\bibliographystyle{icml2024}

\newpage
\appendix
\onecolumn
\section{Omitted details for Section \ref{sec:regret_bound}}
\subsection{Proof of Theorem \ref{thm:regret_bound}}
\label{app:main_thm_proof}
    \paragraph{Bounding regret of the main cost component.}
    To analyze the performance of \textsc{PSConRL} over $T$ time steps, define $K_T = \arg \max \{ k : t_k \leq T \}$, number of episodes of \textsc{PSConRL} until time $T$. By \citet{TS_MDP_Ouyang_2017}[Lemma 1], $K_T$ is upper-bounded by $\sqrt{2SAT \log(T)}$. Using the tower rule, we can decompose the total regret into the sum of episodic regrets conditioned on the good event that the sampled CMDP is feasible:
    \begin{align}
    \label{eq:regret_decompose}
        BR_+ (T;c_0) & 
        = \ee \left [ \sum_{t=1}^T \left ( c_0(s_t, a_t) - J^{\pi_*}(c_0; p_{\ast}) \right )_+ \right ] 
        = \sum_{k=1}^{K_T} \ee \left [ R_{0,k} \right ] 
        \notag
        \\
        & 
        = \sum_{k=1}^{K_T} \ee \left [ R_{0,k} \lvert p_k \notin \mathcal{G} \right ] \prob \left ( p_k \notin \mathcal{G} \right )
        + \sum_{k=1}^{K_T} \ee \left [ R_{0,k} \lvert p_k \in \mathcal{G} \right ] \prob \left ( p_k \in \mathcal{G} \right ),
        \\
        \notag
    \end{align}
    where $R_{0,k} = \sum_{t=t_k}^{t_{k+1}-1} \left [ c_0(s_t, a_t) -  J^{\pi_{\ast}}(c_0; p_{\ast}) \right ]_+$, $J^{\pi_*}(c_0; p_{\ast})$ is the optimal loss of CMDP $M$, and $\mathcal{G}$ is defined in the statement of Lemma \ref{lm:exploration_lemma}.
    
    Define two events $A_1 = \{p_k \notin \mathcal{G} \wedge  N_{t_k}(s,a) \geq \sqrt{T/A}, \forall s, a \}$ and $A_2 = \{p_k \notin \mathcal{G} \wedge \exists (s,a):  N_{t_k}(s,a) < \sqrt{T/A}\}$. Then, the first term of \eqref{eq:regret_decompose} can be further decomposed as
    \begin{align*}
        \sum_{k=1}^{K_T} \ee \left [ R_{0,k} \lvert p_k \notin \mathcal{G} \right ] \prob \left ( p_k \notin \mathcal{G} \right ) 
        & = \sum_{k=1}^{K_T} \ee \left [ R_{0,k} \lvert A_1 \right ] \prob \left ( A_1 \right ) 
        + \sum_{k=1}^{K_T} \ee \left [ R_{0,k} \lvert A_2 \right ] \prob \left ( A_2 \right ). \\
    \end{align*}
    First, we bound $\sum_{k=1}^{K_T} \ee \left [ R_{0,k} \lvert A_1 \right ] \prob \left ( A_1 \right )$. Let $\bar{p}_k(s^{\prime}|s,a) = \frac{N_{t_k}(s,a,s^{\prime})}{N_{t_k}(s,a)}$ be the empirical mean for the transition probability at the beginning of episode $k$, where $N_{t_k}(s,a,s^{\prime})$ is the number of visits to $(s,a,s^{\prime})$. Define the confidence set 
    \begin{align*}
        B_k = \left \{ p : \norm{\bar{p}_k(\cdot|s,a) - p(\cdot|s,a)}_1 \leq \beta_k \right \},
    \end{align*}
    where $\beta_k = \sqrt{\frac{14S \log (2A T t_k)}{\max \{1, N_{t_k}(s,a)\}}}$. 
    
    Now, we observe that $ \left \{ A_1 \right \} \subseteq \left \{ \norm{p_k(\cdot|s,a) - p_{\ast}(\cdot|s,a)}_1 > \beta_k \right \}$, otherwise, by Lemma \ref{lm:feasibility}, problem \eqref{eq:objective_cost} would be feasible under $p_k$, and therefore $p_k \in \mathcal{G}$ which contradicts to $p_k \notin \mathcal{G}$. Next, we note that $B_k$ is $\mathcal{F}_{t_k}$-measurable which allows us to use Lemma \ref{lm:posterior_lemma}. Setting $\delta= 1/T$ in Lemma \ref{lm:jaksch_ci} implies that $\prob \left ( \norm{p_k(\cdot|s,a) - p_{\ast}(\cdot|s,a)}_1 > \beta_k \right )$ can be bounded by $\frac{2}{15Tt_k^6}$. Indeed, 
    \begin{align*}
        \prob \left ( \norm{p_k(\cdot|s,a) - p_{\ast}(\cdot|s,a)}_1 > \beta_k \right ) \leq \prob \left ( p_{\ast} \notin B_k \right ) + \prob \left ( p_k \notin B_k \right ) = 2 \prob \left ( p_{\ast} \notin B_k \right ) \leq \frac{2}{15Tt_k^6},
    \end{align*}
    where the last equality follows from Lemma \ref{lm:posterior_lemma} and the last inequality is due to Lemma \ref{lm:jaksch_ci}.

    Finally, we have
    \begin{align*}
        \sum_{k=1}^{K_T} \ee \left [ R_{0,k} \lvert A_1 \right ] \prob \left ( A_1 \right ) \leq \sum_{k=1}^{K_T} \frac{2 (t_{k+1} - t_k)}{15Tt_k^{6}} \leq \frac{2}{15}\sum_{k=1}^{\infty} k^{-6} \leq 1.
    \end{align*}
 
    To bound the term $\sum_{k=1}^{K_T} \ee \left [ R_{0,k} \lvert A_2 \right ] \prob \left ( A_2 \right )$, we rewrite it as
    \begin{align*}
        \sum_{k=1}^{K_T} \ee \left [ R_{0,k} \left.\right| A_2 \right ] \prob \left ( A_2 \right ) & =  \sum_{k=1}^{K_T} \sum_{t=t_k}^{t_{k+1}-1} \ee \left [ (c_0(s_t, a_t) - J^{\pi_{\ast}}(c_0; p_{\ast}) \lvert A_2 \right ] \prob \left ( A_2 \right ) \\
        & \leq \sum_{k=1}^{K_T} \sum_{t=t_k}^{t_{k+1}-1} \prob \left ( A_2 \right ) \leq \sum_{k=1}^{K_T} \sum_{t=t_k}^{t_{k+1}-1} \prob \left ( \exists (s,a):  N_{t_k}(s,a) < \sqrt{T/A} \left.\right| p_k \notin \mathcal{G} \right ), \\ 
    \end{align*}
    where the first inequality holds because $\lvert (c_0(s_t, a_t) - J^{\pi_{\ast}}(c_0; p_{\ast}) \lvert \leq 1$ and the last inequality is by $\prob(A \wedge B) = \prob(A \lvert B) \prob(B)$. 
    Then, by Lemma \ref{lm:exploration_lemma}, we obtain
    \begin{align*}
        &\sum_{k=1}^{K_T} \sum_{t=t_k}^{t_{k+1}-1} \prob \left ( \exists (s,a):  N_{t_k}(s,a) < \sqrt{T/A} \left.\right| p_k \notin \mathcal{G} \right ) \leq 2DS\sqrt{AT} + 1.
    \end{align*}
    
    For the second term of \eqref{eq:regret_decompose}, conditioned on the good event, $\{ p_k \in \mathcal{G} \}$, the sampled CMDP is feasible, and the standard analysis of \citet{TS_MDP_Ouyang_2017} can be applied. Lemma \ref{lm:regret_on_good_event} shows that this term can be bounded by $(D+1) \sqrt{2SAT \log (T)} + 49DS \sqrt{AT \log (AT)}$.

    Putting all bounds together, we obtain the resulting regret bound of:
    \begin{align*}
        BR_+ (T;c_0) & 
        \leq O \left ( DS \sqrt{AT \log (AT)} \right ).
    \end{align*}
    
    \paragraph{Bounding regret of auxiliary cost components.}

    Without loss of generality, fix the cost component $c_i$ and its threshold $\tau_i$ for some $i$ and focus on analyzing the $i$-th component regret. Similarly to the decomposition of the main component, we obtain:
    \begin{align}
    \label{eq:regret_decompose_aux}
        BR_+ (T;c_i) & 
        = \ee \left [ \sum_{t=0}^T \left ( c_i(s_t, a_t) - \tau_i \right )_+ \right ] 
        = \sum_{k=1}^{K_T} \ee \left [ R_{i,k} \right ] 
        \notag
        \\
        & 
        = \sum_{k=1}^{K_T} \ee \left [ R_{i,k} \lvert p_k \notin \mathcal{G} \right ] \prob \left ( p_k \notin \mathcal{G} \right ) 
        + \sum_{k=1}^{K_T} \ee \left [ R_{i,k} \lvert p_k \in \mathcal{G} \right ] \prob \left ( p_k \in \mathcal{G} \right )  
        \notag
    \end{align}
    where $R_{i,k} = \sum_{t=t_k}^{t_{k+1}-1} \left [ c_i(s_t, a_t) -  \tau_i \right ]_+$.

    The first term can be analyzed similarly to the main cost component and bounded by $2DS\sqrt{AT} + 2$. The regret bound of the second term is the same as the regret bound of the analogous term of the main cost component. Its analysis is marginally different and provided in Lemma \ref{lm:regret_on_good_event_aux}. \qed
    
\subsection{Proof of Feasibility lemma (Lemma \ref{lm:feasibility})}
\label{apx:proof_feasibility}

\begin{proof}
    Fix some $i \in \{1, \dots, m\}$. Further, we will omit index $i$ and write $c$ and $\tau$ instead of $c_i$ and $\tau_i$.
    
    With slight abuse of notation, we rewrite the equation \eqref{eq:bellman} in vector form:
    \begin{equation}
        \label{eq:bellman_vector}
        J^{\pi, p, c} + v^{\pi, p} = c_{\pi} +  P_{\pi} v^{\pi, p}.
    \end{equation}
    Above, $J^{\pi, p, c}$, $v^{\pi, p}$, and $c_{\pi}$ are $S$ dimensional vectors of $J^{\pi, p, c}_{s}$, $v^{\pi, p}_{s}$, and $c_{s, \pi(s)}$ with $J_s^{\pi, p, c} = J^{\pi}(s;c, p)$, $v^{\pi, p}_{s} = v^{\pi}(s; p)$, and  $c_{s, \pi(s)} = \sum_{a \in \mathcal{A}} \pi(a|s) c(s,a)$; and $P_{\pi}$ is the transition matrix whose rows formed by the vectors $p_{s, \pi(s)}$, where $p_{s, \pi(s)} = \sum_{a \in \mathcal{A}} \pi(a|s) p(\cdot | s, a)$.

    Let $P_{\bar{\pi}}^k$ be the transition matrix whose rows are formed by the vectors $p^k_{s,\bar{\pi}(s)}$, and $P^{\ast}_{\bar{\pi}}$ be the transition matrix whose rows are formed by the vectors $p^{\ast}_{s,\bar{\pi}(s)}$. Since $N_{t_k}(s,a) \geq \sqrt{T/A}$ for all $(s,a)$, $\norm{p_k(\cdot|s,a) - p_{\ast}(\cdot|s,a)}_1 \leq \sqrt{\frac{14S \log (2A T t_k)}{\max \{1, N_{t_k}(s,a)\}}}$, and the span of the bias function $v^{\bar{\pi}, p_{\ast}}$ is at most $D$ (by Assumption \ref{assum:WASP}), we observe
    \begin{equation*}
        \left ( p_k(\cdot|s,a) - p_{\ast}(\cdot|s,a) \right )^\intercal v^{\bar{\pi}, p_{\ast}} \leq \norm{p_k(\cdot|s,a) - p_{\ast}(\cdot|s,a)}_1 \norm{v^{\bar{\pi}, p_{\ast}} }_\infty \leq \delta D
    \end{equation*}
    where $\delta = \sqrt{\frac{14SA^{1/2} \log (2A T t_k)}{\sqrt{T}}}$. Above implies 
    \begin{equation}
        \label{eq:lim_matrices_diff}
        \left( P_{\bar{\pi}}^k - P^{\ast}_{\bar{\pi}} \right ) v^{\bar{\pi}, p_{\ast}} \leq \delta D \textbf{1}
    \end{equation}
    where $\textbf{1}$ is the vector of all 1s.
    
    Following \cite{NIPS2017_Shipra_OPSRL}, let $(P_{\bar{\pi}}^k)^{\ast}$ denote the limiting matrix for Markov chain with transition matrix $P_{\bar{\pi}}^k$. Observe that $P_{\bar{\pi}}^k$ is aperiodic and irreducible because of Assumption \ref{assum:slater}. This implies that $(P_{\bar{\pi}}^k)^{\ast}$ is of the form $\textbf{1}\boldsymbol{q}^{\intercal}$ where $\boldsymbol{q}$ is the stationary distribution of $P_{\bar{\pi}}^k$ (refer to (A.4) in \cite{Puterman_mdp}). Also, $(P_{\bar{\pi}}^k)^{\ast} P_{\bar{\pi}}^k = (P_{\bar{\pi}}^k)^{\ast}$ and $(P_{\bar{\pi}}^k)^{\ast} \textbf{1} = \textbf{1}$.

    Therefore, the gain of policy $\bar{\pi}$
    \begin{equation*}
        J^{\bar{\pi}, p_k, c} \textbf{1} = (c_{\bar{\pi}}^{\intercal} \boldsymbol{q}) \textbf{1} = (P_{\bar{\pi}}^k)^{\ast} c_{\bar{\pi}}
    \end{equation*}
    
    Now,
    \begin{align*}
        J^{\bar{\pi}, p_k, c} \textbf{1} - J^{\bar{\pi}, p_{\ast}, c} \textbf{1} & = (P_{\bar{\pi}}^k)^{\ast} c_{\bar{\pi}} - J^{\bar{\pi}, p_{\ast}, c} \textbf{1} \\
        & = (P_{\bar{\pi}}^k)^{\ast} c_{\bar{\pi}} - J^{\bar{\pi}, p_{\ast}, c} \left( (P_{\bar{\pi}}^k)^{\ast} \textbf{1} \right ) 
        && \quad \tag{using $(P_{\bar{\pi}}^k)^{\ast} \textbf{1} = \textbf{1}$} \\
        & = (P_{\bar{\pi}}^k)^{\ast} \left( c_{\bar{\pi}} - J^{\bar{\pi}, p_{\ast}, c}  \textbf{1} \right ) \\
        & = (P_{\bar{\pi}}^k)^{\ast} \left( I - P^{\ast}_{\bar{\pi}} \right ) v^{\bar{\pi}, p_{\ast}}
        && \quad \tag{using \eqref{eq:bellman_vector}} \\
        & = (P_{\bar{\pi}}^k)^{\ast} \left( P_{\bar{\pi}}^k - P^{\ast}_{\bar{\pi}} \right ) v^{\bar{\pi}, p_{\ast}}
        && \quad \tag{using $(P_{\bar{\pi}}^k)^{\ast} P_{\bar{\pi}}^k = (P_{\bar{\pi}}^k)^{\ast}$} \\
        & \leq D \delta \textbf{1}.
        && \quad \tag{using \eqref{eq:lim_matrices_diff} and $(P_{\bar{\pi}}^k)^{\ast} \textbf{1} = \textbf{1}$}
    \end{align*}

    Then observing that $D \delta \leq \gamma$, we obtain
    \begin{equation*}
        J^{\bar{\pi}}(c, p_k) - J^{\bar{\pi}}(c, p_{\ast}) \leq D \delta \leq \gamma.
    \end{equation*}
    Using Assumption \ref{assum:slater} and rearranging the terms in the inequality above, it follows
    \begin{equation*}
        J^{\bar{\pi}}(c, p_k) \leq J^{\bar{\pi}}(c, p_{\ast}) + \gamma \leq \tau - \gamma + \gamma \leq \tau.
    \end{equation*}

\end{proof}

\subsection{Proof of Exploration lemma (Lemma \ref{lm:exploration_lemma})}
\label{apx:proof_exploration}
Before providing the proof for the Exploration lemma, we first show that \textsc{PSConRL} requires at most $D$ timesteps to reach a target state when LP \eqref{eq1}-\eqref{eq4} is infeasible.

\begin{lemma}
\label{lm:diam_bound_for_explor_MDP}
    Fix some target state $\Bar{s}$ and its corresponding MDP $M_{\Bar{s}}$ and let $\pi_{\Bar{s}}$ be a solution of Eq. \eqref{eq:opt_bellman}. Then $T_{s \to \Bar{s}}^{\pi_{\Bar{s}}} \leq D$.
\end{lemma}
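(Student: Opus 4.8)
The plan is to show that the optimal average cost $J^*(c_{\bar s}, p_t)$ of the exploration MDP $M_{\bar s}$ equals $0$, and then to translate the bias function bound into a hitting-time bound. First I would argue that $J^*(c_{\bar s}, p_t) = 0$: since $p_t \in \Omega_*$ (the support of the prior is contained in $\Omega_*$ by Assumption~\ref{assum:WASP}, and the posterior inherits this support), the sampled CMDP is communicating with diameter at most $D$, so from any state there is a stationary policy reaching $\bar s$ in finite expected time; running such a policy and then the cost-$0$ ``stay'' behavior at $\bar s$ (or, more carefully, using that $\bar s$ is reachable and that $c_{\bar s}$ vanishes only at $\bar s$) gives average cost $0$, which is clearly optimal since $c_{\bar s} \ge 0$. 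Hence $J^*(c_{\bar s}, p_t) = 0$ and the Bellman optimality equation~\eqref{eq:opt_bellman} for $\pi_{\bar s}$ reduces to $v^*(s; c_{\bar s}, p_t) = c_{\bar s}(s, \pi_{\bar s}(s)) + \sum_{s'} p_t(s'|s, \pi_{\bar s}(s)) v^*(s'; c_{\bar s}, p_t)$ for all $s$, with $v^*(\bar s; c_{\bar s}, p_t) = 0$ after the normalization $\min_s v^* = 0$ (note $v^*$ attains its minimum at $\bar s$ because the expected cost-to-reach is smallest there).

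The key identity is that $v^*(s; c_{\bar s}, p_t)$ is exactly the expected total cost incurred by $\pi_{\bar s}$ started from $s$ before the process is ``reset'' — and because $c_{\bar s}$ charges $1$ for every step spent outside $\bar s$ and $0$ at $\bar s$, this expected total cost under the policy $\pi_{\bar s}$ equals the expected number of steps to first reach $\bar s$ from $s$, i.e. $v^*(s; c_{\bar s}, p_t) = T^{\pi_{\bar s}}_{s \to \bar s}$. This is a standard first-passage/stochastic-shortest-path argument: iterate the Bellman equation and use that once the chain is at $\bar s$ the accumulated cost contribution is $0$ (formally, $v^*(\bar s) = 0$ acts as the absorbing boundary value), so unrolling gives $v^*(s) = \ee\big[\sum_{j=0}^{\eta-1} c_{\bar s}(s_j, \pi_{\bar s}(s_j))\big] = \ee[\eta] = T^{\pi_{\bar s}}_{s \to \bar s}$ where $\eta$ is the hitting time of $\bar s$. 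Then, since $v^*$ satisfies the same Bellman equation as a bias function for the communicating MDP $M_{\bar s}$ with $p_t$, and by the remark in Section~\ref{sec:problem_formulation} that $D(p_t) \le D$ implies $sp(p_t) \le D$, we get $\max_s v^*(s; c_{\bar s}, p_t) = sp(p_t) \le D$, hence $T^{\pi_{\bar s}}_{\bar s\, \to\, \bar s}$... wait — more precisely $T^{\pi_{\bar s}}_{s \to \bar s} \le D$ for every $s$, in particular for the start state of the episode, which is the claim.

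The one subtlety — and the step I expect to be the main obstacle — is justifying that $v^*$ for this particular exploration MDP is bounded by $D$ rather than by some larger quantity: the span bound $sp(p) \le D$ in the text is stated for the bias of the \emph{optimal policy of the CMDP}, whereas here $M_{\bar s}$ is an unconstrained single-cost MDP with the synthetic cost $c_{\bar s}$, so I need to invoke the relation between diameter and optimal bias span for communicating MDPs directly (as in \citet{JMLR:v11:jaksch10a}/\citet{regal_2009}): for any communicating MDP with rewards/costs in $[0,1]$, the span of the optimal bias is at most the diameter. Since $M_{\bar s}$ shares the transition structure $p_t$ with $D(p_t) \le D$, this applies. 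An alternative route that avoids even citing this fact: directly bound $v^*(s; c_{\bar s}, p_t) = T^{\pi_{\bar s}}_{s \to \bar s} \le \min_\pi T^{\pi}_{s \to \bar s} \le D(p_t) \le D$, where the first inequality holds because $\pi_{\bar s}$ is the \emph{optimal} policy for minimizing expected hitting cost to $\bar s$ (so it does at least as well as the diameter-witnessing policy), and the middle inequality is the definition of the diameter. This second route is cleaner and I would present it: identify $v^*$ with the optimal expected hitting time, note optimality beats the diameter witness, done.
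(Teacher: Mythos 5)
Your proposal has a genuine gap at its central step. In your preferred ``second route'' the chain $v^*(s;c_{\bar s},p_t)=T^{\pi_{\bar s}}_{s\to\bar s}\le\min_\pi T^{\pi}_{s\to\bar s}$ is justified by saying that $\pi_{\bar s}$ ``is the optimal policy for minimizing expected hitting cost to $\bar s$.'' But $\pi_{\bar s}$ is defined as a solution of the \emph{average-cost} optimality equation \eqref{eq:opt_bellman}, not of a stochastic-shortest-path problem: the gain criterion is insensitive to transient behavior (two policies agreeing on the recurrent class have the same $J$), so average-cost optimality alone says nothing about hitting times, and the assertion that the policy extracted from \eqref{eq:opt_bellman} also minimizes $T^{\pi}_{s\to\bar s}$ is precisely the content of the lemma --- you are assuming what has to be proved. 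The paper establishes exactly this claim by contradiction: supposing some $\pi$ reaches $\bar s$ faster from some $s$, it splices $\pi$ (followed until $\bar s$ is first hit) with $\pi^*_{\bar s}$ (afterwards), notes the spliced policy has the same gain but, after normalizing both bias functions to have minimum zero, a strictly smaller bias at $s$, contradicting the optimality of $(J^*,v^*)$; a separate aperiodicity transformation handles periodic chains. Once that is in place, the closing step $\min_\pi T^{\pi}_{s\to\bar s}\le D(p_t)\le D$ is the same as yours.

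Your first route has an additional concrete error: $J^*(c_{\bar s},p_t)=0$ is false in general. The exploration MDP keeps the original transitions, so there need not be any action that keeps the agent at $\bar s$; the optimal gain equals $1$ minus the maximal long-run fraction of time spent at $\bar s$, which is typically strictly positive. Consequently \eqref{eq:opt_bellman} does not reduce to an SSP equation, and the correct first-passage identity (unroll the Poisson equation up to the hitting time $\eta$ of $\bar s$ and apply optional stopping) is $v(s)-v(\bar s)=(1-J)\,\ee_s[\eta]$ rather than $v(s)=\ee_s[\eta]$. Combining this with the bound of $D$ on the span of $v^*$ only yields $T^{\pi_{\bar s}}_{s\to\bar s}\le D/(1-J^*)$, which is weaker than the claim. (The paper's write-up also states $v^*(s)=T^{\pi^*_{\bar s}}_{s\to\bar s}$ without the $(1-J^*)$ factor, but its contradiction argument survives that correction because $1-J^*>0$; your route 1 does not.)
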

\begin{proof}
    For simplicity, assume that MDP $M_{\Bar{s}}$ is aperiodic (we will consider the general case later). In such MDP, value iteration is known to converge, and we can find $(J^*, v^*)$ that satisfy Eq. \eqref{eq:opt_bellman} and the corresponding optimal policy $\pi^*_{\Bar{s}}$.
    
    Assume that there exists some policy $\pi$ and state $s$ such that $T^{\pi^*_{\Bar{s}}}_{s \to \Bar{s}} >  T^{\pi}_{s \to \Bar{s}}$. Consider the following policy $\pi^{\prime}$: follow $\pi$ starting from $s$ and wait until $\Bar{s}$ is reached (suppose that this happens in $\tau$ steps), then follow the optimal policy $\pi^*_{\Bar{s}}$. Note that $\tau$ is a random variable and, by definition,
    \begin{equation*}
        \ee[\tau] = T^{\pi^{\prime}}_{s \to \Bar{s}}.
    \end{equation*}

    Let $(J^{\pi^{\prime}}, v^{\pi^{\prime}})$ be the average cost and the bias function of policy $\pi^{\prime}$. First, note that $J^* = J^{\pi^{\prime}}$, since $\pi^{\prime}$ is constructed the way that some policy $\pi$ is utilized for a finite number of steps and the same policy $\pi^*_{\Bar{s}}$ is used in the long term. Next, if $v$ is a bias vector, $v$ plus any constant is also a bias vector. Therefore, without loss of generality, we can apply the following transformation to $v^*$ and $v^{\pi^{\prime}}$: 
    \begin{align}
    \label{eq:bias_transform}
    \begin{split}
        v^* & = v^* - \min_{s \in \mathcal{S}} v^*(s), \\
        v^{\pi^{\prime}} & = v^{\pi^{\prime}} - \min_{s \in \mathcal{S}} v^{\pi^{\prime}}(s). 
    \end{split}
    \end{align}

    Observe that by definition of the cost function $c_{\Bar{s}}$, after transformation $\eqref{eq:bias_transform}$, $v^*(s) = T^{\pi^*_{\Bar{s}}}_{s \to \Bar{s}}$ and $v^{\pi^{\prime}}(s) = T^{\pi^{\prime}}_{s \to \Bar{s}}$. Thus, for state $s$, we obtain
    \begin{equation*}
        J^* + v^*(s) > J^{\pi^{\prime}} + v^{\pi^{\prime}}(s),
    \end{equation*}
    which contradicts the optimality of $(J^*, v^*)$.

    Now, if the MDP $M_{\Bar{s}}$ is periodic, we apply the aperiodicity transformation from \citet{Puterman_mdp} to get a new MDP $\tilde{M}_{\Bar{s}}$: choose $\theta$ satisfying $0 < \theta < 1$ and define $\tilde{\mathcal{S}} = \mathcal{S}$, $\tilde{\mathcal{A}} = \mathcal{A}$, and
    \begin{align*}
        \tilde{c}_{\Bar{s}} & = \theta c_{\Bar{s}}, \\
        \tilde{p}(\cdot | s,a) & = (1 - \theta) \textbf{e}_s +  \theta p(\cdot | s,a).
    \end{align*}
    Note that $\tilde{M}_{\Bar{s}}$ is communicating and aperiodic, and the previous reasoning applies to $\tilde{M}_{\Bar{s}}$. Let $\tilde{J}^{\pi}, \tilde{v}^{\pi}, \tilde{T}^{\pi}_{s \to \Bar{s}}$ denote the quantities associated with $\tilde{M}_{\Bar{s}}$ for some policy $\pi$. Then, by \citet{Puterman_mdp}[Proposition 8.5.8], these are related to the corresponding quantities for $M_{\Bar{s}}$ as follows:
    \begin{align*}
        \tilde{J}^{\pi} & = J^{\pi}, \\
        \tilde{v}^{\pi} & = v^{\pi}, \\
        \tilde{T}^{\pi}_{s \to \Bar{s}} & = \frac{T^{\pi}_{s \to \Bar{s}}}{\theta}.
    \end{align*}
    Using these relations and the fact that we proved the result for $\tilde{M}_{\Bar{s}}$ gives us the result for periodic MDPs. Since $\min_{\pi} T^{\pi}_{s \to \Bar{s}} \leq \max_{s, s^{\prime}}\min_{\pi} T^{\pi}_{s \to s^{\prime}}$, it immediately follows that $T^{\pi^*_{\Bar{s}}}_{s \to \Bar{s}} \leq D$.
\end{proof}

\begin{proof}[Proof of Lemma \ref{lm:exploration_lemma}]
    Let $T_e$ be the first time when every $(s,a)$-pair is visited at least $\sqrt{T/A}$ times given $\{p_k \notin \mathcal{G}\}$, $T_e = \min \{ t: N_t(s,a) \geq \sqrt{T/A} \quad \forall (s,a) \left.\right| p_k \notin \mathcal{G} \}$.

    Since $T_k \leq T_{k-1} + 1$ and $\prob \left ( \exists (s,a):  N_{t}(s,a) < \sqrt{T/A} \left.\right| p_k \notin \mathcal{G} \right )$ is non-increasing in $t$, i.e., $\prob \left ( \exists (s,a):  N_{t}(s,a) < \sqrt{T/A} \left.\right| p_k \notin \mathcal{G} \right ) \leq \prob \left ( \exists (s,a):  N_{t-1}(s,a) < \sqrt{T/A} \left.\right| p_k \notin \mathcal{G} \right )$, for $k>1$, we observe
    \begin{align*}
        \sum_{t=t_k}^{t_{k+1}-1} \prob \left ( \exists (s,a):  N_{t_k}(s,a) < \sqrt{T/A} \left.\right| p_k \notin \mathcal{G} \right ) & \leq \prob \left ( \exists (s,a):  N_{t_k}(s,a) < \sqrt{T/A} \left.\right| p_k \notin \mathcal{G} \right ) \\
        & + \sum_{t=t_{k-1}}^{t_{k}-1} \prob \left ( \exists (s,a):  N_{t}(s,a) < \sqrt{T/A} \left.\right| p_k \notin \mathcal{G} \right ).
    \end{align*}

    Next, by noting that $\prob \left ( \exists (s,a):  N_{t}(s,a) < \sqrt{T/A} \left.\right| p_k \notin \mathcal{G} \right ) = \prob \left ( T_e > t \right )$, we have
    \begin{align*}
        & \sum_{k:t_k \leq T} \ee \left [ \sum_{t=t_k}^{t_{k+1}-1} \mathbb{I} \left \{ \exists (s,a):  N_{t_k}(s,a) < \sqrt{T/A} \right \} \left.\right| p_k \notin \mathcal{G} \right ] = \sum_{k=1}^{K_T} \sum_{t=t_k}^{t_{k+1}-1} \prob \left ( \exists (s,a):  N_{t_k}(s,a) < \sqrt{T/A} \left.\right| p_k \notin \mathcal{G} \right ) \\
        & \leq 1 + \sum_{k=2}^{K_T} \sum_{t=t_k}^{t_{k+1}-1} \prob \left ( \exists (s,a):  N_{t_k}(s,a) < \sqrt{T/A} \left.\right| p_k \notin \mathcal{G} \right ) \\ 
        & \leq 1 + \sum_{k=2}^{K_T} \left [ \prob \left ( \exists (s,a):  N_{t_k}(s,a) < \sqrt{T/A} \left.\right| p_k \notin \mathcal{G} \right ) 
        + \sum_{t=t_{k-1}}^{t_{k}-1} \prob \left ( \exists (s,a):  N_{t}(s,a) < \sqrt{T/A} \left.\right| p_k \notin \mathcal{G} \right ) \right ]\\
        & = 1 + \sum_{k=2}^{K_T} \left [ \prob \left ( T_e > t_k \right ) + \sum_{t=t_{k-1}}^{t_{k}-1} \prob \left ( T_e > t \right ) \right ] = 1 + \sum_{k=2}^{K_T} \prob \left ( T_e > t_k \right ) + \sum_{t=1}^{T} \prob \left ( T_e > t \right ) \leq 1 + 2 \ee [T_e],
    \end{align*}
    where the last inequality follows from the tail sum formula $\ee [T_e] = \sum_{t=0}^{\infty} \prob (T_e > t)$. Finally, by Lemma \ref{lm:diam_bound_for_explor_MDP}, we have $\ee[T_e] \leq DS\sqrt{AT}$, which gives
    \begin{equation*}
        \sum_{k:t_k \leq T} \ee \left [ \sum_{t=t_k}^{t_{k+1}-1} \mathbb{I} \left \{ \exists (s,a):  N_{t_k}(s,a) < \sqrt{T/A} \right \} \left.\right| p_k \notin \mathcal{G} \right ] \leq 2DS\sqrt{AT} + 1.
    \end{equation*}
\end{proof}

\subsection{Auxiliary lemmas}

\begin{lemma}[Regret of the main cost on the good event]
    \label{lm:regret_on_good_event}
    Under Assumption \ref{assum:WASP}, conditioned on the good event $\{ p_k \in \mathcal{G} \}$,
    \begin{equation*}
        \sum_{k=1}^{K_T} \ee \left [ R_{0,k} \lvert p_k \in \mathcal{G} \right ] \prob \left ( p_k \in \mathcal{G} \right ) \leq (D+1) \sqrt{2SAT \log (T)} + 49DS \sqrt{AT \log (AT)}.
    \end{equation*}
\end{lemma}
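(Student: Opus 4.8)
The plan is to run the episodic posterior‑sampling analysis of \citet{TS_MDP_Ouyang_2017}, adapted to the fact that on the good event $\{p_k\in\mathcal{G}\}$ the sampled CMDP is feasible: $\pi_k$ is its optimal constrained policy, its average loss $J^{\pi_k}(c_0;p_k)$ equals the value of LP \eqref{eq1}--\eqref{eq4}, and it admits a bias function $v^{\pi_k}(\cdot;c_0,p_k)$ of span at most $D$ by Assumption \ref{assum:WASP} (since $p_k\in\Omega_*$). First I would bound, pointwise in $t$, $(c_0(s_t,a_t)-J^{\pi_*}(c_0;p_*))_+\le |c_0(s_t,a_t)-J^{\pi_k}(c_0;p_k)|+(J^{\pi_k}(c_0;p_k)-J^{\pi_*}(c_0;p_*))_+$, and substitute the Bellman equation \eqref{eq:bellman} for $(p_k,\pi_k,c_0)$ into the first term. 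This produces the three usual pieces: (i) a per‑episode telescoping term $v^{\pi_k}(s_{t_k};c_0,p_k)-v^{\pi_k}(s_{t_{k+1}};c_0,p_k)\le D$; (ii) martingale differences arising from $a_t\sim\pi_k(\cdot|s_t)$ and $s_{t+1}\sim p_{\ast}(\cdot|s_t,a_t)$, which vanish in expectation (they remain mean‑zero when multiplied by the $\mathcal{F}_{t_k}$‑measurable indicator $\mathbb{I}\{p_k\in\mathcal{G}\}$); and (iii) the transition‑estimation term $\sum_t\big(p_{\ast}(\cdot|s_t,a_t)-p_k(\cdot|s_t,a_t)\big)^{\intercal}v^{\pi_k}(\cdot;c_0,p_k)$.

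The two routine pieces are handled as follows. For (i), summing over episodes and using $K_T\le\sqrt{2SAT\log T}$ (\citet{TS_MDP_Ouyang_2017}, Lemma 1) contributes at most $D\sqrt{2SAT\log T}$, which together with the martingale/fluctuation bookkeeping gives the $(D+1)\sqrt{2SAT\log T}$ term. For (iii) I would reuse the confidence‑set machinery from the $A_1$ analysis in the proof of Theorem \ref{thm:regret_bound}: bound $|(p_{\ast}-p_k)^{\intercal}v^{\pi_k}|\le D\big(\norm{\bar p_k(\cdot|s_t,a_t)-p_{\ast}(\cdot|s_t,a_t)}_1+\norm{\bar p_k(\cdot|s_t,a_t)-p_k(\cdot|s_t,a_t)}_1\big)$, control $\bar p_k$ via the confidence set $B_k$ with radius $\beta_k$ (using Lemma \ref{lm:jaksch_ci} with $\delta=1/T$, paying $O(1)$ off the event), invoke Lemma \ref{lm:posterior_lemma} to replace the $p_k$‑norm by the $p_{\ast}$‑norm in expectation, and finally combine the doubling property $N_{t_k}(s,a)\ge N_t(s,a)/2$ with the pigeonhole estimate $\sum_{t\le T}1/\sqrt{\max\{1,N_t(s_t,a_t)\}}=O(\sqrt{SAT})$; tracking the absolute constants yields the $49DS\sqrt{AT\log(AT)}$ term.

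The main obstacle is the model‑difference term $\sum_k\mathbb{I}\{p_k\in\mathcal{G}\}\sum_t\big(J^{\pi_k}(c_0;p_k)-J^{\pi_*}(c_0;p_*)\big)_+$. In the unconstrained case the analogue carries no positive part, so after applying the second stopping rule $T_k\le T_{k-1}+1$ (making the weight $T_{k-1}+1$ be $\mathcal{F}_{t_k}$‑measurable) and Lemma \ref{lm:posterior_lemma} it collapses to $O(\ee[K_T])$; the one‑sided truncation breaks this exact cancellation, since for i.i.d.\ posterior draws $\ee[(h(p_k)-h(p_{\ast}))_+\mid\mathcal{F}_{t_k}]=\tfrac12\,\ee[|h(p_k)-h(p_{\ast})|\mid\mathcal{F}_{t_k}]$, where $h(p)$ denotes the optimal LP value of the CMDP with transitions $p$ (set to $0$ off $\mathcal{G}$). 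To control this I would bound $|h(p_k)-h(p_{\ast})|\le|h(p_k)-h(\bar p_k)|+|h(p_{\ast})-h(\bar p_k)|$, use Lemma \ref{lm:posterior_lemma} to equate the expectations of the two summands ($\bar p_k$ is $\mathcal{F}_{t_k}$‑measurable), and then invoke a stability estimate for the optimal CMDP value around $p_{\ast}$ --- this is exactly where good‑event feasibility, and the sensitivity argument underlying Lemma \ref{lm:feasibility} (that perturbing transitions by $\varepsilon$ in $\ell_1$ moves both the constraints $J^\pi(c_i;\cdot)\le\tau_i$ and the objective by $O(D\varepsilon)$), enter. Summing the resulting per‑episode bounds against the confidence radii $\beta_k$ using $\sum_k T_k/\sqrt{t_k}=O(\sqrt T)$, the delicate point being to land in the claimed $O(DS\sqrt{AT\log(AT)})$ order rather than a worse power of $S$ or $A$, absorbs this term; adding (i), (iii), the model‑difference term, and $O(1)$ from low‑probability events gives the bound. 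The auxiliary‑component version (Lemma \ref{lm:regret_on_good_event_aux}) is obtained the same way with $J^{\pi_*}(c_0;p_{\ast})$ replaced by the constant $\tau_i$; there the model‑difference term vanishes outright, because on the good event $\pi_k$ is feasible for $p_k$ so $J^{\pi_k}(c_i;p_k)\le\tau_i$, which makes that case strictly easier.
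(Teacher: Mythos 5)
Your skeleton matches the paper's: on the good event the paper also substitutes the Bellman equation (in state--action form, Eq.~\eqref{eq:bellman_qfunc}) into $c_0(s_t,a_t)$ and splits the episodic regret into a model-difference term $R_0$, a vanishing term $R_1$, a telescoping bias term $R_2\le DK_T$, and a transition-estimation term $R_3\le 49DS\sqrt{AT\log(AT)}$ (Lemmas~\ref{lm:R0}--\ref{lm:R2}, imported from \citet{TS_MDP_Ouyang_2017}), with $K_T\le\sqrt{2SAT\log T}$ supplying the $(D+1)\sqrt{2SAT\log T}$ part. Your pieces (i) and (iii) are exactly the paper's $R_2$ and $R_3$. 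One caution on (i): you state the triangle inequality \emph{pointwise in $t$} and then substitute Bellman into $|c_0(s_t,a_t)-J^{\pi_k}(c_0;p_k)|$; if the absolute value really sits inside the sum over $t$, the telescoping cancellation producing the per-episode $D$ is destroyed and you get $DT_k$ per episode. The decomposition must be done on the signed episodic sum, as in the paper.

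The genuine gap is in your treatment of the model-difference term. The paper does \emph{not} carry the positive part through the decomposition: it bounds $R_0=\sum_k\ee[\sum_t(J^{\pi_k}(c_0;p_k)-J^{\pi_*}(c_0;p_*))]$ with no truncation, so the standard argument applies --- on the good event $\pi_k$ solves LP \eqref{eq1}--\eqref{eq4}, hence $J^{\pi_k}(c_0;p_k)$ is the optimal value of the sampled CMDP, and Lemma~\ref{lm:posterior_lemma} with the $\mathcal{F}_{t_k}$-measurable weight $T_{k-1}+1$ collapses the sum to $\ee[K_T]$ (Lemma~\ref{lm:R0}). Your route instead keeps $(J^{\pi_k}(c_0;p_k)-J^{\pi_*}(c_0;p_*))_+$ and reduces it to $\ee|h(p_k)-h(\bar p_k)|$ plus a ``stability estimate for the optimal CMDP value around $p_*$.'' That estimate is asserted, not proved, and it does not follow from anything in the paper: Lemma~\ref{lm:feasibility} only establishes \emph{feasibility} of the fixed Slater policy $\bar\pi$ under $p_k$, not Lipschitz continuity of the optimal \emph{value} of the constrained LP in the transitions. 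Perturbation bounds for the optimal value of a CMDP require quantifying how the feasible region shifts, and the natural constants carry a $1/\gamma$ (Slater margin) dependence that is absent from the claimed bound $(D+1)\sqrt{2SAT\log T}+49DS\sqrt{AT\log(AT)}$; landing at $O(DS\sqrt{AT\log(AT)})$ by this route is exactly the step you would need to supply and have not. So either adopt the paper's handling (drop the positive part before decomposing, at the cost of the same looseness the paper accepts in its first display) or actually prove the value-stability lemma; as written, the argument for this term is incomplete.
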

Most of the analysis here recovers the analysis of \citet{TS_MDP_Ouyang_2017}. Nonetheless, for the sake of clarity, we provide the complete proof of Lemma \ref{lm:regret_on_good_event}.
\begin{proof}
    First, we rewrite equation \eqref{eq:bellman} in terms of the state-action pair \cite{chen_2022_optimisQlearn}:
    \begin{equation}
    \label{eq:bellman_qfunc}
        J^{\pi}(s; c, p) + q^{\pi}(s,a;p) = c(s,a) + \sum_{s^{\prime}} p(s^{\prime}|s,a) v^{\pi}(s^{\prime}; p),
    \end{equation}
    where $v^{\pi}(s; p)$ and $q^{\pi}(s,a;p)$ are connected by $v^{\pi}(s; p) = \sum_a \pi(a|s)  q^{\pi}(s,a;p)$.
    
    Conditioned on the good event $\{ p_k \in \mathcal{G} \}$, every policy $\pi_k$ is the solution of LP \eqref{eq1}-\eqref{eq4}, and we can apply the Bellman equation \eqref{eq:bellman_qfunc} to $c_0(s_t,a_t)$, and decompose $R_{0,k}$ into the following terms.
    \begin{align}
        & \sum_{k=1}^{K_T} \ee \left [ R_{0,k} \lvert p_k \in \mathcal{G} \right ]  \prob \left ( p_k \in \mathcal{G} \right ) \leq \sum_{k=1}^{K_T} \ee \left [ R_{0,k} \lvert p_k \in \mathcal{G} \right ]
        =  
        \sum_{k=1}^{K_T} \ee \left [ \sum_{t=t_k}^{t_{k+1}-1} \big( c_0(s_t,a_t)   -  J^{\pi_{\ast}}(c_0; p_{\ast}) \big) \right ] 
        \notag\\
        & = \sum_{k=1}^{K_T} \ee \left [ \sum_{t=t_k}^{t_{k+1}-1} \left( J^{\pi_k}(c_0; p_k)   -  J^{\pi_{\ast}}(c_0; p_{\ast}) +  q^{\pi_k}(s_t, a_t; p_k) -  \sum_{s' \in \mathcal S}p_k(s'|s_t,a_t)v^{\pi_k}(s',p_k)
        \right)\right]
        \notag\\
        & =
        \underbrace{\sum_k \ee \left [ \sum_t \big( J^{\pi_k}(c_0; p_k) -  J^{\pi_{\ast}}(c_0; p_{\ast}) \big) \right ]}_{R_0} 
        + 
        \underbrace{\sum_k \ee \left [ \sum_t \left ( q^{\pi_k}(s_t, a_t; p_k) - v^{\pi_k}(s_t; p_k) \right ) \right ]}_{R_1}
        \notag\\
        &
        +
        \underbrace{\sum_k \ee \left[\sum_t  \left[  v^{\pi_k}(s_t; p_k) -  v^{\pi_k}(s_{t+1};p_k)
        \right]\right]}_{R_2}
        +
        \underbrace{\sum_k \ee \left[\sum_t  \left[ v^{\pi_k}(s_{t+1};p_k) - \sum_{s'}p_k(s'|s_t,a_t)v^{\pi_k}(s';p_k)
        \right]\right]}_{R_3}.
        \notag
        \label{eq:costregretbound}
    \end{align}
    Now, we note that $R_1 = 0$ as 
    \begin{equation}
    \label{eq:val_fn_q_fn}
        \ee [q^{\pi_k}(s_t, a_t; p_k) - v^{\pi_k}(s_t; p_k)] = \ee [q^{\pi_k}(s_t, a_t; p_k) - \sum_a \pi_k(a|s_t)  q^{\pi}(s_t,a;p_k)] = 0.
    \end{equation}  
    Next, applying lemmas \ref{lm:R0}, \ref{lm:R1}, \ref{lm:R2} to $R_0$, $R_2$, $R_3$, correspondingly, gives us the result.
\end{proof}

\begin{lemma}[Regret of the auxiliary costs on the good event]
    \label{lm:regret_on_good_event_aux}
    Under Assumption \ref{assum:WASP}, conditioned on the good event $\{ p_k \in \mathcal{G} \}$,
    \begin{equation*}
        \sum_{k=1}^{K_T} \ee \left [ R_{i,k} \lvert p_k \in \mathcal{G} \right ] \prob \left ( p_k \in \mathcal{G} \right ) \leq (D+1) \sqrt{2SAT \log (T)} + 49DS \sqrt{AT \log (AT)}.
    \end{equation*}
\end{lemma}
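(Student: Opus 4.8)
The plan is to replay the proof of Lemma~\ref{lm:regret_on_good_event} almost verbatim, substituting the fixed auxiliary cost $c_i$ for the main cost $c_0$ and the constant threshold $\tau_i$ for the optimal gain $J^{\pi_*}(c_0;p_*)$. First I would observe that on the good event $\{p_k\in\mathcal{G}\}$ the policy $\pi_k$ is the solution of LP~\eqref{eq1}-\eqref{eq4} for the sampled model $p_k$, so the pair $(J^{\pi_k}(\cdot;c_i,p_k),q^{\pi_k}(\cdot,\cdot;c_i,p_k))$ satisfying the state-action Bellman equation~\eqref{eq:bellman_qfunc} for $c_i$ is well defined. Substituting that Bellman equation for $c_i(s_t,a_t)$ inside $R_{i,k}$ (handling the positive part exactly as for the main cost) produces the same four-term split
\[
\sum_{k=1}^{K_T}\ee\!\left[R_{i,k}\mid p_k\in\mathcal{G}\right]\prob(p_k\in\mathcal{G})\;\le\; R_0^{(i)}+R_1^{(i)}+R_2^{(i)}+R_3^{(i)},
\]
where $R_0^{(i)}=\sum_k\ee[\sum_t(J^{\pi_k}(c_i;p_k)-\tau_i)\mid p_k\in\mathcal{G}]\prob(p_k\in\mathcal{G})$, $R_1^{(i)}$ collects the $q^{\pi_k}-v^{\pi_k}$ terms, $R_2^{(i)}$ is the within-episode telescoping of the bias $v^{\pi_k}(\cdot;c_i,p_k)$, and $R_3^{(i)}$ is the transition-deviation plus martingale-difference term.

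The single place the argument departs from Lemma~\ref{lm:regret_on_good_event} is the bound on $R_0^{(i)}$, and there it is actually simpler. For the main cost this term was controlled by combining optimality of $\pi_k$ for the sampled CMDP with the posterior-sampling identity of Lemma~\ref{lm:posterior_lemma}; here I would instead use feasibility directly: conditioned on $\{p_k\in\mathcal{G}\}$ the LP is feasible and $\pi_k$ is a feasible point of LP~\eqref{eq1}-\eqref{eq4}, so $J^{\pi_k}(c_i;p_k)\le\tau_i$ holds surely on that event, whence every summand of $R_0^{(i)}$ is nonpositive and $R_0^{(i)}\le0$ with no appeal to posterior sampling. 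Next, $R_1^{(i)}=0$ follows exactly as in~\eqref{eq:val_fn_q_fn}, since $v^{\pi_k}(s_t;c_i,p_k)=\sum_a\pi_k(a|s_t)q^{\pi_k}(s_t,a;c_i,p_k)$ and taking expectation over $a_t\sim\pi_k(\cdot|s_t)$ annihilates the difference. Finally, $R_2^{(i)}$ and $R_3^{(i)}$ are bounded by the identical auxiliary estimates used for the main cost (Lemma~\ref{lm:R1} for $R_2^{(i)}$, Lemma~\ref{lm:R2} for $R_3^{(i)}$): $R_2^{(i)}$ telescopes within each episode and, using $K_T\le\sqrt{2SAT\log T}$ together with $\norm{v^{\pi_k}(\cdot;c_i,p_k)}_\infty\le sp(p_k)\le D$, contributes at most $(D+1)\sqrt{2SAT\log T}$; $R_3^{(i)}$ is a bounded martingale-difference sum plus a term controlled through the deviation $\norm{p_k-\bar p_k}_1$ and the doubling criterion, contributing at most $49DS\sqrt{AT\log(AT)}$. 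Adding the four bounds gives the claim.

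I do not expect a genuine obstacle: the proof is essentially a transcription of Lemma~\ref{lm:regret_on_good_event}. The one point that needs a line of justification is that the bias-span bound $\norm{v^{\pi_k}(\cdot;c_i,p_k)}_\infty\le D$ holds for the auxiliary cost functions and not merely for $c_0$, which is true because on a communicating CMDP with costs valued in $[0,1]$ (Assumption~\ref{assum:WASP}) the span of the bias of any gain/constrained-optimal stationary policy is at most the diameter, exactly as noted after~\eqref{eq:bellman}. A secondary check is that conditioning on the $\mathcal{F}_{t_k}$-measurable event $\{p_k\in\mathcal{G}\}$ leaves the telescoping and martingale manipulations in $R_2^{(i)}$ and $R_3^{(i)}$ intact, which it does since those steps are pathwise.
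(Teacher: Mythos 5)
Your proposal matches the paper's own proof essentially verbatim: the same Bellman-equation decomposition into $R_0,R_1,R_2,R_3$, the same observation that $R_0 \le 0$ because feasibility of $\pi_k$ on the good event gives $J^{\pi_k}(c_i;p_k)\le\tau_i$ surely (so no posterior-sampling step is needed), $R_1=0$ via \eqref{eq:val_fn_q_fn}, and $R_2,R_3$ bounded by Lemmas \ref{lm:R1} and \ref{lm:R2}. The approach is correct and identical to the paper's.
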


\begin{proof}
    Similarly to Lemma \ref{lm:regret_on_good_event}, conditioned on the good event $\{ p_k \in \mathcal{G} \}$, we can decompose $R_{i,k}$ as follows: 
    \begin{align}
        & \sum_{k=1}^{K_T} \ee \left [ R_{i,k} \lvert p_k \in \mathcal{G} \right ]  \prob \left ( p_k \in \mathcal{G} \right ) \leq \sum_{k=1}^{K_T} \ee \left [ R_{i,k} \lvert p_k \in \mathcal{G} \right ]
        =  
        \sum_{k=1}^{K_T} \ee \left [ \sum_{t=t_k}^{t_{k+1}-1} \big( c_i(s_t,a_t)   -  \tau_i \big) \right ] 
        \notag\\
        & = \sum_{k=1}^{K_T} \ee \left [ \sum_{t=t_k}^{t_{k+1}-1} \left( J^{\pi_k}(c_i; p_k)   -  \tau_i +  q^{\pi_k}(s_t, a_t; p_k) -  \sum_{s' \in \mathcal S}p_k(s'|s_t,a_t)v^{\pi_k}(s',p_k)
        \right)\right]
        \notag\\
        & =
        \underbrace{\sum_k \ee \left [ \sum_t \big( J^{\pi_k}(c_i; p_k) -  \tau_i \right ]}_{R_0} 
        + 
        \underbrace{\sum_k \ee \left [ \sum_t \left ( q^{\pi_k}(s_t, a_t; p_k) - v^{\pi_k}(s_t; p_k) \right ) \right ]}_{R_1}
        \notag\\
        &
        +
        \underbrace{\sum_k \ee \left[\sum_t  \left[  v^{\pi_k}(s_t; p_k) -  v^{\pi_k}(s_{t+1};p_k)
        \right]\right]}_{R_2}
        +
        \underbrace{\sum_k \ee \left[\sum_t  \left[ v^{\pi_k}(s_{t+1};p_k) - \sum_{s'}p_k(s'|s_t,a_t)v^{\pi_k}(s';p_k)
        \right]\right]}_{R_3}.
        \notag
        \label{eq:costregretbound}
    \end{align}
    Now, we note that $\left( J^{\pi_k}(c_i; p_k) - \tau_i \right)$ is negative on the good event $\{ p_k \in \mathcal{G} \}$ for all $k$, and term $R_0$ can be dismissed. $R_1 = 0$ because of \eqref{eq:val_fn_q_fn}, and $R_2$ and $ R_3$ regret terms can be bounded by Lemmas \ref{lm:R1} and \ref{lm:R2} correspondingly.
\end{proof}

\begin{lemma}[Lemma 3 from \cite{TS_MDP_Ouyang_2017}]
For any cost function $c : \mathcal{S} \times \mathcal{A} \xrightarrow{} [0, 1]$, 
\label{lm:R0}
\begin{align*}
    \ee \left [ \sum_{k=1}^{K_T}  \sum_{t=t_k}^{t_{k+1}-1}  \big ( J^{\pi_k}(c;p_k) - J^{\pi_{\ast}}(c;p_{\ast}) \big ) \right ]  \leq K_T \leq \sqrt{2SAT \log(T)}.
\end{align*}
\end{lemma}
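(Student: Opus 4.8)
The statement to prove is Lemma~\ref{lm:R0}, which bounds the sum over episodes of the gap between the average cost of the sampled policy under the sampled transitions and the optimal average cost under the true transitions. The plan is to exploit the posterior sampling property (Lemma~\ref{lm:posterior_lemma}) together with the fact that each $J^{\pi_k}(c;p_k)$ is constant across states, so that the sum telescopes into a count of episodes.

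\textbf{Step 1: reduce the inner sum to a per-episode quantity.} Since conditioned on the good event $\pi_k$ is the optimal policy of the sampled CMDP $M_{p_k}$ and $J^{\pi_k}(c;p_k)$ does not depend on the initial state (by \cite{Puterman_mdp}[Theorem 8.3.2], as noted after~\eqref{eq:bellman}), the inner sum over $t = t_k,\dots,t_{k+1}-1$ simply gives $T_k\big(J^{\pi_k}(c;p_k) - J^{\pi_*}(c;p_*)\big)$, where $T_k = t_{k+1}-t_k$. Thus I want to bound $\ee\big[\sum_{k=1}^{K_T} T_k\big(J^{\pi_k}(c;p_k) - J^{\pi_*}(c;p_*)\big)\big]$.

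\textbf{Step 2: apply the stopping-time posterior sampling lemma.} The key point is that $t_k$ is a stopping time, and the episode length $T_k$ is determined by the second stopping criterion as $T_k \le T_{k-1}+1$ — crucially $T_{k-1}$ is $\mathcal{F}_{t_k}$-measurable (it is the length of the previous episode, known at the start of episode $k$). So I would first bound $T_k \le T_{k-1}+1$ and write the sum as $\ee\big[\sum_k (T_{k-1}+1)\big(J^{\pi_k}(c;p_k) - J^{\pi_*}(c;p_*)\big)\big]$. Now $T_{k-1}+1$ is $\mathcal{F}_{t_k}$-measurable, and $J^{\pi_*}(c;p_*)$ paired with the $\mathcal{F}_{t_k}$-measurable multiplier is also handled directly; applying Lemma~\ref{lm:posterior_lemma} with $g(p,X) = X\cdot(J^{\pi(p)}(c;p))$ where $\pi(p)$ is the optimal policy of the CMDP with transitions $p$ and $X = T_{k-1}+1$, we get $\ee[(T_{k-1}+1)J^{\pi_k}(c;p_k)] = \ee[(T_{k-1}+1)J^{\pi_*}(c;p_*)]$. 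Hence the two terms cancel and $\ee\big[\sum_k (T_{k-1}+1)\big(J^{\pi_k}(c;p_k) - J^{\pi_*}(c;p_*)\big)\big] = 0$, leaving only the slack from $T_k \le T_{k-1}+1$, which contributes at most $\ee\big[\sum_{k=1}^{K_T} 1 \cdot \|c\|_\infty\big] \le \ee[K_T]$ since $J$-values lie in $[0,1]$.

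\textbf{Step 3: bound the episode count.} Finally I invoke \cite{TS_MDP_Ouyang_2017}[Lemma 1], already cited in the proof of Theorem~\ref{thm:regret_bound}, which gives $K_T \le \sqrt{2SAT\log(T)}$ deterministically (the doubling-plus-length-doubling episode scheme guarantees this). Combining, $\ee\big[\sum_{k=1}^{K_T}\sum_{t=t_k}^{t_{k+1}-1}(J^{\pi_k}(c;p_k)-J^{\pi_*}(c;p_*))\big] \le K_T \le \sqrt{2SAT\log T}$.

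\textbf{Main obstacle.} The only subtlety — and the step I would be most careful about — is the measurability bookkeeping in Step 2: one must verify that the policy $\pi_k$ is a deterministic (measurable) function of $p_k$ alone so that $J^{\pi_k}(c;p_k)$ can be written as $g(p_k)$ for the function $g$ appearing in Lemma~\ref{lm:posterior_lemma}, and that $T_{k-1}$ is genuinely $\mathcal{F}_{t_k}$-measurable rather than depending on future randomness. Both hold: the LP solution (or shortest-path policy) is a fixed function of the sampled transitions, and $T_{k-1}$ is fully observed by time $t_k$. Everything else is routine; this is essentially a restatement of Lemma~3 of \cite{TS_MDP_Ouyang_2017} adapted to our episode rule.
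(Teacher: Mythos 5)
Your proof is correct and follows essentially the same route as the source the paper cites for this lemma (the paper itself does not reprove it, deferring to Lemma~3 of \cite{TS_MDP_Ouyang_2017}): reduce the inner sum to $T_k\big(J^{\pi_k}(c;p_k)-J^{\pi_\ast}(c;p_\ast)\big)$, bound $T_k\le T_{k-1}+1$, use the $\mathcal{F}_{t_k}$-measurability of $T_{k-1}+1$ together with Lemma~\ref{lm:posterior_lemma} to cancel the two $J$-terms in expectation, and charge the remainder to $\ee[K_T]\le\sqrt{2SAT\log T}$. The only cosmetic imprecision is in the slack accounting: the per-episode slack $(T_{k-1}+1-T_k)$ need not be at most $1$, and it is the telescoped total $\sum_{k}(T_{k-1}+1-T_k)\le K_T$ (combined with $|J^{\pi_k}(c;p_k)-J^{\pi_\ast}(c;p_\ast)|\le 1$) that yields the stated bound, which is what your argument implicitly relies on.
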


\begin{lemma}[Lemma 4 from \cite{TS_MDP_Ouyang_2017}]
\label{lm:R1}
\begin{align*}
    \ee \left [ \sum_{k=1}^{K_T} \sum_{t=t_k}^{t_{k+1}-1} \big(v^{\pi_k}(s_{t}; p_k) - v^{\pi_k}(s_{t+1}; p_k)\big) \right ]  \leq DK_T \leq D \sqrt{2SAT \log(T)}.
\end{align*}
\end{lemma}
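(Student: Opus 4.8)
The plan is to use that, within each episode, the summand $v^{\pi_k}(s_t;p_k)-v^{\pi_k}(s_{t+1};p_k)$ is a telescoping difference of a function that is frozen for the whole episode, so the inner sum collapses to its endpoints. For a fixed episode $k$, both the sampled transitions $p_k$ and the policy $\pi_k$, and hence the bias $v^{\pi_k}(\cdot;p_k)$, are determined by the history up to $t_k$ and held fixed until $t_{k+1}$, so
\[
\sum_{t=t_k}^{t_{k+1}-1}\big(v^{\pi_k}(s_t;p_k)-v^{\pi_k}(s_{t+1};p_k)\big)=v^{\pi_k}(s_{t_k};p_k)-v^{\pi_k}(s_{t_{k+1}};p_k),
\]
which is at most $\max_s v^{\pi_k}(s;p_k)-\min_s v^{\pi_k}(s;p_k)$, the span of the episode's bias function. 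The cancellation is legitimate term by term even though the episode length $T_k=t_{k+1}-t_k$ is random, since only the first and last states of the episode survive.

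The next step is to bound this span by $D$. Under the good-event conditioning inherited from Lemma \ref{lm:regret_on_good_event}, $\pi_k$ is the constrained-optimal policy produced by LP \eqref{eq1}--\eqref{eq4} for the sampled transitions $p_k$; and by Assumption \ref{assum:WASP} the posterior, like the prior, is supported on $\Omega_*$, so $p_k$ corresponds to a communicating CMDP with $D(p_k)\le D$. By the remark following the definition of $\mathrm{sp}(p)$ (citing \citet{regal_2009}), the span of the optimal bias of such a CMDP is at most $D(p_k)\le D$, for the main component $c_0$ just as for each auxiliary component $c_i$ needed in Lemma \ref{lm:regret_on_good_event_aux}. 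Summing the per-episode bound $D$ over the at most $K_T$ episodes gives a sure bound of $DK_T$ for the double sum; taking expectations and invoking the deterministic estimate $K_T\le\sqrt{2SAT\log T}$ of \citet{TS_MDP_Ouyang_2017}[Lemma 1] completes the argument.

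I expect the main obstacle to be the span bound $\max_s v^{\pi_k}(s;c_0,p_k)-\min_s v^{\pi_k}(s;c_0,p_k)\le D$. In the unconstrained analysis of \citet{TS_MDP_Ouyang_2017} this is immediate: $\pi_k$ is gain-optimal, its bias equals the optimal bias, and the span of the optimal bias of a communicating MDP is controlled by the diameter via a standard exchange argument. Here $\pi_k$ is only constrained-optimal and may be a randomized policy whose induced chain mixes arbitrarily slowly, so I would re-derive the bound directly with the same hitting-time idea: for any two states $s,s'$, prepend to $\pi_k$ a stationary policy reaching $s$ from $s'$ in at most $D$ expected steps and then switch to $\pi_k$; the modified policy has the same long-run averages as $\pi_k$, so it stays feasible with the same gain, while incurring at most $D$ extra cost during the transient phase, which yields $v^{\pi_k}(s';c_0,p_k)\le v^{\pi_k}(s;c_0,p_k)+D$. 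A minor point to confirm is that the whole computation is performed under the good-event conditioning of Lemma \ref{lm:regret_on_good_event}, so that $\pi_k$ is indeed the LP solution rather than one of the exploration policies.
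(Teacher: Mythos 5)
Your route is the same one the paper relies on: Lemma \ref{lm:R1} is imported verbatim from \citet{TS_MDP_Ouyang_2017} (Lemma~4 there), whose proof is exactly your argument --- telescope the frozen bias function over each episode, bound each episode's contribution by the span of that bias, and multiply by the deterministic episode count $K_T\le\sqrt{2SAT\log T}$. The telescoping step and the episode-count step are correct as you state them; the cancellation is pathwise, so the randomness of the episode length $T_k$ causes no difficulty.

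The gap is in the step you yourself flag as the main obstacle, and your proposed repair does not close it. The exchange argument ``navigate from $s'$ to $s$ in at most $D$ expected steps, then follow $\pi_k$'' bounds the asymptotic excess cost of the \emph{modified} policy started at $s'$, not $v^{\pi_k}(s')$. For the unconstrained optimal bias this suffices because $v^{*}$ satisfies the Bellman \emph{optimality} equation, so $v^{*}(s')$ is a minimum over behaviors and any feasible navigation strategy upper-bounds it. Here $\pi_k$ is only the LP solution: $v^{\pi_k}$ satisfies the policy-evaluation equation \eqref{eq:bellman}, and $v^{\pi_k}(s')$ is the excess cost of following $\pi_k$ itself from $s'$, which is governed by the hitting times of $\pi_k$'s own (possibly randomized, slowly mixing) chain rather than by the diameter $D$; a constrained-optimal policy can easily place small probability on the actions that realize the diameter, inflating its own bias span well beyond $D$. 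So $v^{\pi_k}(s')\le v^{\pi_k}(s)+D$ does not follow from your construction. To be fair, the paper does not supply this missing step either --- it asserts $sp(p)\le D$ citing \citet{regal_2009}, a bound proved for unconstrained optimal policies, and then invokes \citet{TS_MDP_Ouyang_2017}, where $\pi_k$ is gain-optimal for the sampled model --- but a self-contained proof would need a genuine bound on the bias span of the constrained-optimal policy (for instance via the hitting times of the chain it induces), not the optimal-policy exchange argument.
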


\begin{lemma}[Lemma 5 from \cite{TS_MDP_Ouyang_2017}]
\label{lm:R2}
\begin{align*}
    \ee\left [\sum_{k=1}^{K_T}\sum_{t=t_k}^{t_{k+1}-1}  \big( v^{\pi_k}(s_{t+1}; p_k) -\sum_{s' \in \mathcal S}p_k(s'|s_t, a_t)v^{\pi_k}(s';p_k)\big)\right ] \leq 49 D S\sqrt{AT\log(AT)}.
\end{align*}
\end{lemma}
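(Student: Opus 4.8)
The plan is to follow the martingale decomposition that \citet{TS_MDP_Ouyang_2017} use for the analogous term in unconstrained posterior sampling, adapting the span bound to the communicating setting via Assumption \ref{assum:WASP}. The summand $v^{\pi_k}(s_{t+1}; p_k) - \sum_{s'} p_k(s'|s_t,a_t) v^{\pi_k}(s'; p_k)$ compares the realized next-state bias against its one-step prediction under the \emph{sampled} model $p_k$, whereas the true next state $s_{t+1}$ is actually drawn from the \emph{true} model $p_{\ast}$. The first step is therefore to add and subtract the one-step prediction under $p_{\ast}$, splitting the summand into a martingale-difference term $v^{\pi_k}(s_{t+1}; p_k) - \sum_{s'} p_{\ast}(s'|s_t,a_t) v^{\pi_k}(s'; p_k)$ and a model-deviation term $\sum_{s'} \big(p_{\ast}(s'|s_t,a_t) - p_k(s'|s_t,a_t)\big) v^{\pi_k}(s'; p_k)$.

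For the martingale term, since $\pi_k$ and $p_k$ are $\mathcal{F}_{t_k}$-measurable and $s_{t+1} \sim p_{\ast}(\cdot|s_t,a_t)$ given $\mathcal{F}_t$, the conditional mean given $\mathcal{F}_t$ is zero. I would view the running partial sum as a martingale in $t$ whose increments are bounded by $2\,\norm{v^{\pi_k}(\cdot;p_k)}_\infty \leq 2D$, after centering the bias so that $\min_s v^{\pi_k}(s;p_k)=0$ and using that the span of the bias is at most the diameter for communicating CMDPs (Assumption \ref{assum:WASP}). Invoking the optional stopping theorem at the bounded stopping time $t_{K_T+1}-1$ then shows this term contributes exactly zero in expectation. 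The only subtlety is that the episode boundaries $t_k$ and the episode count $K_T$ are random, but this is harmless because the increments have zero conditional mean at every fixed $t$ and the stopping time is bounded by a constant multiple of $T$.

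For the model-deviation term, I would bound it pointwise by $\norm{v^{\pi_k}(\cdot;p_k)}_\infty \, \norm{p_{\ast}(\cdot|s_t,a_t) - p_k(\cdot|s_t,a_t)}_1 \leq D \norm{p_{\ast}(\cdot|s_t,a_t) - p_k(\cdot|s_t,a_t)}_1$. Routing through the empirical estimate $\bar{p}_k$ via the triangle inequality, $\norm{p_{\ast} - p_k}_1 \leq \norm{p_{\ast} - \bar{p}_k}_1 + \norm{\bar{p}_k - p_k}_1$, and combining the $L_1$ concentration bound (Lemma \ref{lm:jaksch_ci}) with the posterior sampling lemma (Lemma \ref{lm:posterior_lemma}) — which guarantees that $p_{\ast}$ and $p_k$ share the same conditional law and hence the same expected deviation from $\bar{p}_k$ — each term is controlled in expectation by the confidence radius $\beta_k = \sqrt{14 S \log(2ATt_k)/\max\{1,N_{t_k}(s_t,a_t)\}}$. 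This yields an expected per-step contribution of order $D \sqrt{S \log(AT)/\max\{1,N_{t_k}(s_t,a_t)\}}$, plus a low-probability residual from trajectories leaving the confidence set that sums to $O(1)$.

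Finally, I would sum over all steps and episodes. The doubling stopping criterion ensures $N_t(s,a) \leq 2 N_{t_k}(s,a)$ throughout episode $k$, so $N_{t_k}$ may be replaced by $N_t$ at the cost of a $\sqrt{2}$ factor, after which the standard potential argument $\sum_{t=1}^{T} 1/\sqrt{\max\{1,N_t(s_t,a_t)\}} \leq (\sqrt{2}+1)\sqrt{SAT}$ closes the summation and produces a bound of order $DS\sqrt{AT\log(AT)}$. The main obstacle is the careful bookkeeping needed to transfer the deviation bound from the true model to the sampled model while keeping the numerical constant at $49$: this requires simultaneously managing the posterior-sampling identity, the $L_1$ concentration inequality with its $2^S$ union bound (the source of the coefficient $14$), and the visit-count potential summation, without loosening any of the constants along the way.
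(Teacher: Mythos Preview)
Your proposal is correct and follows essentially the same approach as the paper, which defers to \citet{TS_MDP_Ouyang_2017} for the proof: the martingale/model-deviation split, the zero-expectation argument for the first part, the triangle-inequality route through the empirical estimate combined with Lemma~\ref{lm:posterior_lemma} and Lemma~\ref{lm:jaksch_ci} for the second, and the visit-count potential summation with the doubling criterion are exactly the ingredients of Ouyang et al.'s Lemma~5. The only point worth flagging is that the span bound $\norm{v^{\pi_k}(\cdot;p_k)}_\infty \leq D$ you invoke is stated in the paper (and in \citet{regal_2009}) for the \emph{unconstrained} optimal bias, whereas here $\pi_k$ is the optimal \emph{constrained} policy for the sampled CMDP; the paper uses the same bound implicitly in Lemmas~\ref{lm:R1}--\ref{lm:R2}, so this is not a discrepancy between your argument and theirs, but it is an assumption you are inheriting rather than verifying.
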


\begin{lemma}[Lemma 17 from \cite{JMLR:v11:jaksch10a}]
\label{lm:jaksch_ci}
For any $t \geq 1$, the probability that the true MDP $M$ is not contained in the set of plausible MDPs $\mathcal{M}(t) = \left \{ (\mathcal{S}, \mathcal{A}, p^{\prime}, \textit{\textbf{c}}, \tau, \rho) : \norm{p^{\prime}(\cdot|s,a) - p_k(\cdot|s,a)}_1 \leq \sqrt{\frac{14S \log (2A t_k / \delta)}{\max \{1, N_{t_k}(s,a)\} }} \right \}$ at time $t$ is at most $\frac{\delta}{15t}$, that is
\begin{align*}
    \prob \left \{ M \notin \mathcal{M}(t) \right \} < \frac{\delta}{15t^6}.
\end{align*}
\end{lemma}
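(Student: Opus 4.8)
The plan is to reduce the statement to a single-distribution $\ell_1$ concentration bound and then handle the data-dependent visit count by a union bound, exactly along the lines of the method-of-types argument of \cite{JMLR:v11:jaksch10a}. Fix a state-action pair $(s,a)$ and a \emph{deterministic} integer $n \geq 1$, and let $\hat{p}_n(\cdot|s,a)$ denote the empirical next-state distribution formed from the first $n$ transitions observed out of $(s,a)$. These $n$ next states are i.i.d. draws from the true categorical distribution $p_{\ast}(\cdot|s,a)$ on the $S$ outcomes, so the core tool is Weissman's $\ell_1$ deviation inequality for empirical distributions: for any $\epsilon > 0$,
\begin{equation*}
    \prob \left ( \norm{\hat{p}_n(\cdot|s,a) - p_{\ast}(\cdot|s,a)}_1 \geq \epsilon \right ) \leq 2^S e^{-n\epsilon^2/2}.
\end{equation*}
The confidence radius in the statement is calibrated precisely against this bound: choosing $\epsilon = \sqrt{14 S \log(2At/\delta)/n}$ makes the exponent equal to $7S \log(2At/\delta)$, so the per-$(s,a,n)$ failure probability is at most $2^S (\delta/(2At))^{7S}$.

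The main obstacle is that the quantity controlled by the lemma is not the empirical mean after a fixed number of samples but the one after the random, data-dependent count $N_{t_k}(s,a)$, which is a stopping-time-type object; the i.i.d. bound above cannot be applied to it directly. The standard remedy is a peeling argument. I would observe that $\{M \notin \mathcal{M}(t)\}$ is contained in the union, over all $SA$ state-action pairs and over every possible value $n \in \{1, \dots, t\}$ of the count (at most $t$ values, since each visit occurs at a distinct time step before $t$), of the fixed-count deviation events. This decouples the concentration from the randomness of $N_{t_k}(s,a)$ at the cost of a multiplicative union-bound factor of at most $SAt$, after which the fixed-$n$ bound above applies term by term.

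It then remains to verify the constants. Multiplying the union-bound count $SAt$ by the per-term probability $2^S(\delta/(2At))^{7S}$ and using $\delta \leq 1$, $S \geq 1$, $A \geq 1$, one checks that the $A^{7S}$ and $2^{6S}$ factors dominate $SA$ while the exponent $7S - 1 \geq 6$ on $t$ leaves the required $t^6$ in the denominator, yielding the bound $\delta/(15 t^6)$; the constant $15$ is the slack absorbing the elementary inequality $S/2^{6S} \leq 1/15$. The only degenerate case is $N_{t_k}(s,a) = 0$, where $\max\{1, N_{t_k}(s,a)\} = 1$ makes the radius $\sqrt{14S\log(2At/\delta)} \geq 2$; since any two distributions on $\mathcal{S}$ are within $\ell_1$ distance $2$, the constraint for an unvisited pair holds trivially and contributes nothing to the failure probability. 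Assembling these pieces gives the claimed concentration inequality.
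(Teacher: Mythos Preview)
The paper does not give its own proof of this lemma; it is stated as an auxiliary result and attributed directly to \cite{JMLR:v11:jaksch10a}. Your argument---Weissman's $\ell_1$ concentration for a fixed sample size, followed by a union bound over all $(s,a)$ pairs and all possible values of the visit count to handle the data-dependence of $N_{t_k}(s,a)$---is exactly the approach of the cited reference, and the constant check you sketch is correct.
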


\section{Experimental details}
\label{apx:experiments}
\subsection{Baselines: OFU-based algorithms}
\label{apx:benchmarks}

We use three OFU-based algorithms from the existing literature for comparison: \textsc{C-UCRL} \citep{pmlr-v120-zheng20a}, \textsc{UCRL-CMDP} \citep{Singh_CMDP_2020}, and \textsc{FHA} (Alg. 3) \cite{chen_2022_optimisQlearn}. 
These algorithms rely on the knowledge of different CMDP components, e.g., \textsc{UCRL-CMDP} relies on knowledge of rewards $r$, whereas \textsc{C-UCRL} uses the knowledge of transitions $p$. To enable fair comparison, all algorithms were extended to the unknown reward/costs and unknown probability transitions setting. Specifically, we assume that each algorithm knows only the states space $\mathcal{S}$ and the action space $\mathcal{A}$, substituting the unknown elements with their empirical estimates:

\begin{equation}
    \label{mean_rewards}
    \Bar{r}_t(s,a) = \frac{\sum_{j=1}^{t-1} \mathbb{I} \{ s_t=s, a_t=a \} r_t}{N_t(s,a) \lor 1}, \quad \forall s \in \mathcal{S}, a \in \mathcal{A},
\end{equation}

\begin{equation}
    \label{mean_costs}
    \Bar{c}_{i,t}(s,a) = \frac{\sum_{j=1}^{t-1} \mathbb{I} \{ s_t=s, a_t=a \} c_{i,t}}{N_t(s,a) \lor 1}, \quad \forall s \in \mathcal{S}, a \in \mathcal{A}, \quad i = 1 \dots, m,
\end{equation}

\begin{equation}
    \label{mean_transitions}
    \Bar{p}_t(s,a,s') = \frac{N_t(s,a,s')}{N_t(s,a) \lor 1}, \quad \forall s, s' \in \mathcal{S}, a \in \mathcal{A}.
\end{equation}
where $r$ is the reward function (inverse main cost $c_0$) and $N_t(s,a)$ and $N_t(s,a,s')$ denote the number of visits to $(s,a)$ and $(s,a,s')$ respectively.

Further, we provide algorithmic-specific details separately for each baseline:

\begin{enumerate}
    
    \item \textsc{C-UCRL} follows a principle of “optimism in the face of reward uncertainty; pessimism in the face of cost uncertainty.” This algorithm, 
    which was developed in \cite{pmlr-v120-zheng20a}, considers conservative (safe) exploration by overestimating both rewards and costs:
    
    \begin{align*}
        \hat{r}_t(s,a) =\Bar{r}_t(s,a) + b_t(s,a)
            \quad\mathrm{and}\quad
        \hat{c}_t(s,a) =\Bar{r}_t(s,a) + b_t(s,a).
    \end{align*}
    
    \textsc{C-UCRL} proceeds in episodes of linearly increasing number of rounds $kh$, where $k$ is the episode index and $h$ is the fixed duration given as an input. In each epoch, the random policy \footnote{Original algorithm utilizes a safe baseline during the first $h$ rounds in each epoch, which is assumed to be known. However, to make the comparison as fair as possible, we assume that a random policy is applied instead.} is executed for $h$ steps for additional exploration, and then policy $\pi_k$ is applied for $(k-1)h$ number of steps, making $kh$ the total duration of episode $k$. 
    
    \item Unlike the previous algorithm, where uncertainty was taken into account by enhancing rewards and costs, \textsc{UCRL-CMDP} \cite{Singh_CMDP_2020} constructs confidence set $\mathcal{C}_t$ over $\Bar{p}_t$:
    
    \begin{equation*}
        \mathcal{C}_t = \left \{ p: |p(s,a,s') - \Bar{p}_t(s,a,s')| \leq b_t(s,a) \quad \forall (s,a) \right \}.
    \end{equation*}
    
    \textsc{UCRL-CMDP} algorithm proceeds in episodes of fixed duration of $\ceil*{T^{\alpha}}$, where $\alpha$ is an input of the algorithm. At the beginning of each round, the agent solves the following constrained optimization problem in which the decision variables are (i) Occupation measure $\mu(s,a)$, and (ii) “Candidate” transition $p'$:

    \begin{align}
        \max_{\mu, p' \in \mathcal{C}_t} \sum_{s,a} \mu(s,a) r(s,a), \label{eq1'}\\
        \mathrm{s.t.}\quad \sum_{s,a} \mu(s,a) c_i(s,a) \leq \tau_i,\, \quad i=1,\dots,m, \\
        \sum_a \mu(s,a) = \sum_{s', a} \mu(s', a) p'(s',a,s), \quad \forall s \in \mathcal{S}, \label{eq3'} \\
        \mu(s,a) \geq 0, \quad \forall (s,a) \in \mathcal{S} \times \mathcal{A}, \quad \sum_{s,a} \mu(s,a) = 1, \label{eq4'}
    \end{align}
    
    Note that program (\ref{eq1'})-(\ref{eq4'}) is not linear anymore as $\mu(s', a)$ is being multiplied by $p'(s',a,s)$ in equation (\ref{eq3'}). This is a serious drawback of \textsc{UCRL-CMDP} algorithm because, as we show in the experiments, program (\ref{eq1'})-(\ref{eq4'}) becomes computationally inefficient for even moderate problems.

    \item \textsc{FHA} (Finite Horizon Approximation for CMDP) divides the $T$ timesteps into $K$ rounds and treats each episode as an episodic finite-horizon CMDP. Fix some episode $k$. Through the lens of occupancy measure that is defined on $S\times A\times H \times S$ space (where $H$ is the length of the episode), \textsc{FHA} optimizes the following linear program:

    \begin{align}
        \max_{\mu} \sum_h \sum_{s,a} r(s,a) \sum_{s'}\mu(s,a,h,s'), \label{eq1''}\\
        s.t. \sum_h \sum_{s,a} c_i(s,a) \sum_{s'}\mu(s,a,h,s') \leq H\tau_i + sp(p_{\ast}), \\
        P_{\mu} \in \left \{ p: |p(s,a,s') - \Bar{p}_k(s,a,s')| \leq b_k(s,a) \quad \forall (s,a) \right \}, \label{eq4''},
    \end{align}
    where $P_{\mu}(s, a, s') = \frac{\mu(s, a, h, s')}{\sum_{s'}\mu(s, a, h, s')} \quad \forall h=1,\dots H$.

    Although program  (\ref{eq1''})-(\ref{eq4''}) is linear, we emphasize that this algorithm requires finding an optimal occupancy measure for each $H$ and each $K$, resulting in $O(S^2AT)$ decision variables. As we mentioned in the experiments, this is prohibitive even for moderate-sized CMDPs. 

\end{enumerate}

\subsection{Environments}
\label{subsec:envs}

To demonstrate the performance of the algorithms, we consider three gridworld environments in our analysis. There are four actions possible in each state, $\mathcal{A} = \{up, down, right, left\}$, which cause the corresponding state transitions, except that actions that would take the agent to the wall leave the state unchanged. Due to the stochastic environment, transitions are stochastic (i.e., even if the agent's action is to go \textit{up}, the environment can send the agent with a small probability \textit{left}). Typically, the gridworld is an episodic task where the agent receives cost 1 (equivalently reward -1) on all transitions until the terminal state is reached. We reduce the episodic setting to the infinite-horizon setting by connecting terminal states to the initial state. Since there is no terminal state in the infinite-horizon setting, we call it the goal state instead. Thus, every time the agent reaches the goal, it receives a cost of 0 (or reward of 0), and every action from the goal state sends the agent to the initial state. We introduce constraints by considering the following specifications of a gridworld environment: Marsrover and Box environments.

\paragraph{Marsrover.} This environment was used in \cite{tessler2018reward, pmlr-v120-zheng20a, NEURIPS2020_Brantley}. The agent must move from the initial position to the goal avoiding risky states. Figure (\ref{marsrover_envs}) illustrates the CMDP structure: the initial position is light green, the goal is dark green, the walls are gray, and risky states are purple. "In the Mars exploration problem, those darker states are the states with a large slope that the agents want to avoid. The constraint we enforce is the upper bound of the per-step probability of stepping into those states with large slope -- i.e., the more risky or potentially unsafe states to explore" \citep{pmlr-v120-zheng20a}.  Each time the agent appears in a purple state incurs an auxiliary cost of 1. Other states incur no auxiliary costs.

\begin{figure}[t]
  \centering
  \subfigure[Marsrover 4x4]{\includegraphics[width=0.25\textwidth]{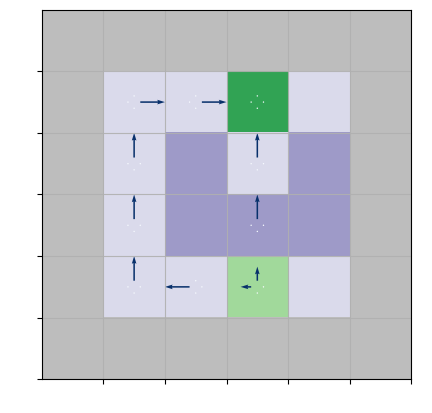}
    \label{4x4_marsrover}}
  \subfigure[Marsrover 8x8]{\includegraphics[width=0.25\textwidth]{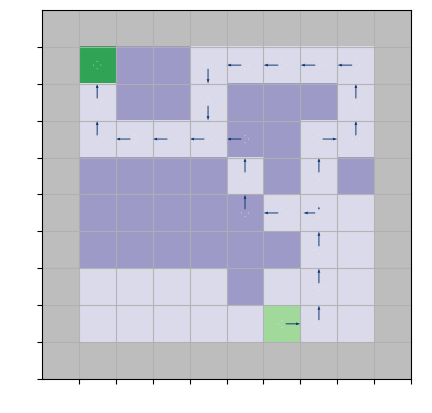}
    \label{8x8_marsrover}}
  \caption{Marsrover gridworlds. The initial position is light green, the goal is dark green, the walls are gray, and risky states are purple. Figure \ref{4x4_marsrover} illustrates 4x4 Marsrover environment. Figure \ref{8x8_marsrover} illustrates 8x8 Marsrover environment. In both cases, the agent's task is to get from the initial state to the goal state, and the optimal policy combines with some probabilities fast and safe ways, which are indicated by arrows on the pictures.}
\label{marsrover_envs}
\vspace*{-1.2\baselineskip}
\end{figure}

Without constraints, the optimal policy is obviously to always go \textit{up} from the initial state. However, with constraints, the optimal policy is a randomized policy that goes \textit{left} and \textit{up} with some probabilities, as illustrated in Figure \ref{4x4_marsrover}. In experiments, we consider two marsrover gridworlds: 4x4, as shown in Figure \ref{4x4_marsrover}, and 8x8, depicted in Figure \ref{8x8_marsrover}.

\paragraph{Box.} Another conceptually different specification of a gridworld is Box environment from \cite{Leike_aisafegrid_2017}. Unlike the Marsrover example, there are no static risky states; instead, there is an obstacle, a box, which is only "pushable" (see Figure \ref{box_main}). Moving onto the blue tile (the box) pushes the box one tile into the same direction if that tile is empty; otherwise, the move fails as if the tile were a wall. The main idea of Box environment is "to minimize effects unrelated to their main objectives, especially those that are irreversible or difficult to reverse" \citep{Leike_aisafegrid_2017}. If the agent takes the fast way (i.e., goes down from its initial state; see Figure \ref{box_down}) and pushes the box into the corner, the agent will never be able to get it back, and the initial configuration would be irreversible. In contrast, if the agent chooses the safe way (i.e., approaches the box from the left side), it pushes the box to the reversible state (see Figure \ref{box_left}). This example illustrates situations of performing the task without breaking a vase in its path, scratching the furniture, bumping into humans, etc.

Each action incurs an auxiliary cost of 1 if the box is in a corner (cells adjacent to at least two walls) and no auxiliary costs otherwise. Similarly to the Marsrover example, without safety constraints, the optimal policy is to take a fast way (go down from the initial state). However, with constraints, the optimal policy is a randomized policy that goes down and left from the initial state.
\begin{figure}[t]
  \centering
  \subfigure[Box (Main)]{\includegraphics[width=0.27\textwidth]{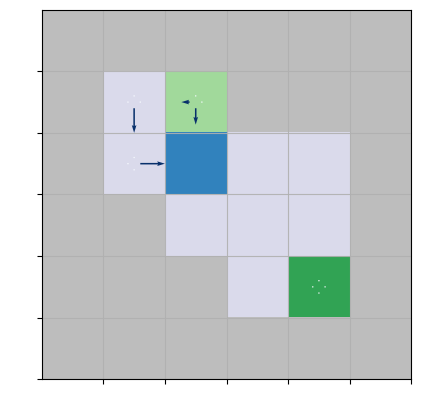}
    \label{box_main}}
  \subfigure[Box (Safe)]{\includegraphics[width=0.27\textwidth]{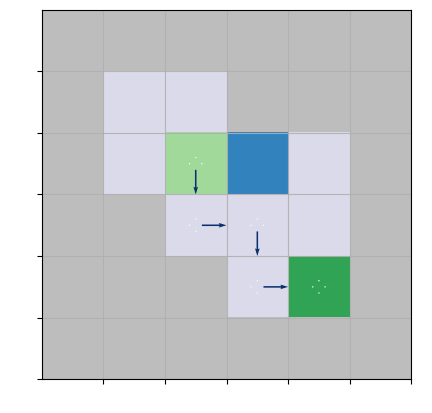}
    \label{box_left}}
  \subfigure[Box (Fast)]{\includegraphics[width=0.27\textwidth]{pics/envs/box_left.png}
    \label{box_down}}
  \caption{Box gridworld. The initial position is light green, the goal is dark green, the walls are gray, and risky states are purple. Figure \ref{box_main} illustrates the initial configuration. The agent's task is to get from the initial state to the goal state, and the optimal policy combines with some probabilities fast and safe ways, which are indicated by arrows on the pictures. Figure \ref{box_left}-\ref{box_down} illustrates safe and fast ways.}
\label{box_envs}
\vspace*{-1.2\baselineskip}
\end{figure}

\subsection{Simulation results}

Figure \ref{fig:sum_results_app} shows the reward (inverse main cost) and average consumption (auxiliary cost) behavior of \textsc{PSConRL}, \textsc{C-UCRL}, \textsc{UCRL-CMDP}, and \textsc{FHA} (Alg. 3) illustrating how the regret from Figure \ref{fig:sum_results} is accumulated. The top row shows the reward performance. The bottom row presents the average consumption of the auxiliary cost. 

Taking a closer look at Marsrover environments (left and middle columns), we see that all algorithms converge to the optimal solution (top row), and their average consumption (middle row) satisfies the constraints in the long run. In the Box example (right column), we see that \textsc{C-UCRL} is stuck with the suboptimal solution. The algorithm exploits safe policy once it is learned, which corresponds to the near-linear regret behavior in Figure \ref{fig:sum_results}. Alternatively, \textsc{PSConRL} converges to the optimal solution relatively quickly (middle and bottom graphs).

\begin{figure}[ht]
    \centering
    \includegraphics[width=0.94\textwidth]{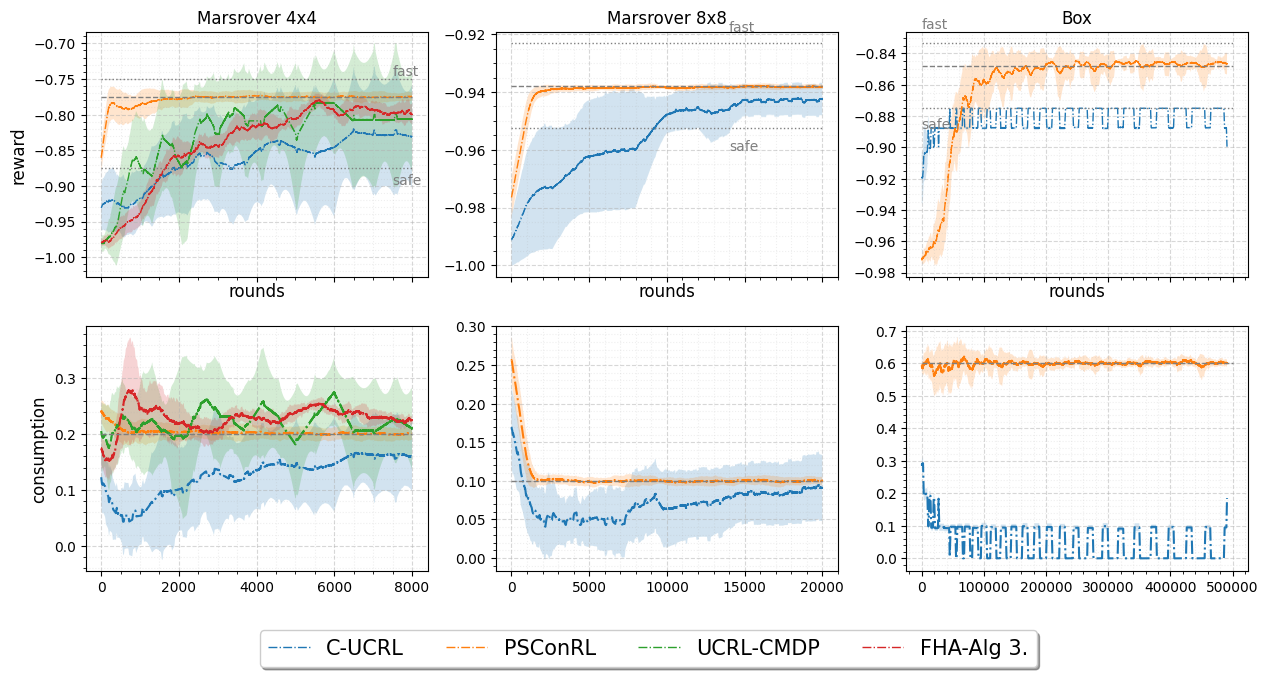}
    \caption{
    \textbf{(Top row)} shows the average reward (inverse average main cost); the dashed line shows the optimal behavior, and the dotted lines depict the reward level of safe and fast policies. \textbf{(Bottom row)} shows the average consumption of the auxiliary cost; the constraint thresholds are 0.2 for Marsrover 4x4, 0.1 for Marsrover 8x8, and 0.6 for Box.  Results are averaged over 100 runs for Marsrover 4x4 and over 30 runs for Marsrover 8x8 and Box.}
    \label{fig:sum_results_app}
    
\end{figure}


\end{document}